\icmltitlerunning{Learning Neurosymbolic Generative Models via Program Synthesis}
\newcommand{\X}{\mathcal{X}}
\newcommand{\Z}{\mathcal{Z}}
\newcommand{\SK}{\mathcal{S}}
\newcommand{\C}{\mathcal{C}}
\newcommand{\kl}{\;\|\;}
\newtheorem{theorem}{Theorem}[section]
\begin{document}

\twocolumn[
\icmltitle{Learning Neurosymbolic Generative Models via Program Synthesis}




\begin{icmlauthorlist}
\icmlauthor{Halley Young}{a}
\icmlauthor{Osbert Bastani}{a}
\icmlauthor{Mayur Naik}{a}
\end{icmlauthorlist}

\icmlaffiliation{a}{University of Pennsylvania}

\icmlcorrespondingauthor{Halley Young}{halleyy@seas.upenn.edu}


\vskip 0.3in
]



\printAffiliationsAndNotice{}  

\begin{abstract}
Significant strides have been made toward designing better generative models in recent years. Despite this progress, however, state-of-the-art approaches are still largely unable to capture complex global structure in data. For example, images of buildings typically contain spatial patterns such as windows repeating at regular intervals; state-of-the-art generative methods can't easily reproduce these structures. We propose to address this problem by incorporating programs representing global structure into the generative model---e.g., a 2D for-loop may represent a configuration of windows. Furthermore, we propose a framework for learning these models by leveraging program synthesis to generate training data. On both synthetic and real-world data, we demonstrate that our approach is substantially better than the state-of-the-art at both generating and completing images that contain global structure.
\end{abstract}

\begin{figure*}[ht]
\centering
\begin{tabular}{cccc}
  \includegraphics[width=1.1in]{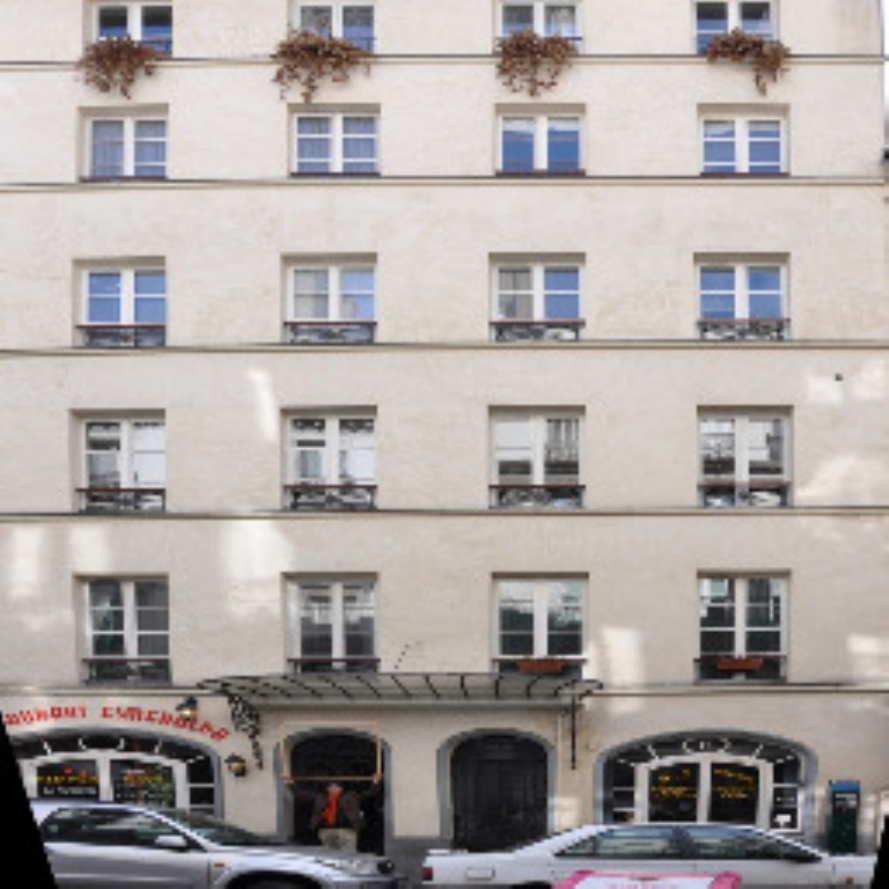} \hspace{0.25cm} & \hspace{0.25cm}
  \includegraphics[width=1.1in]{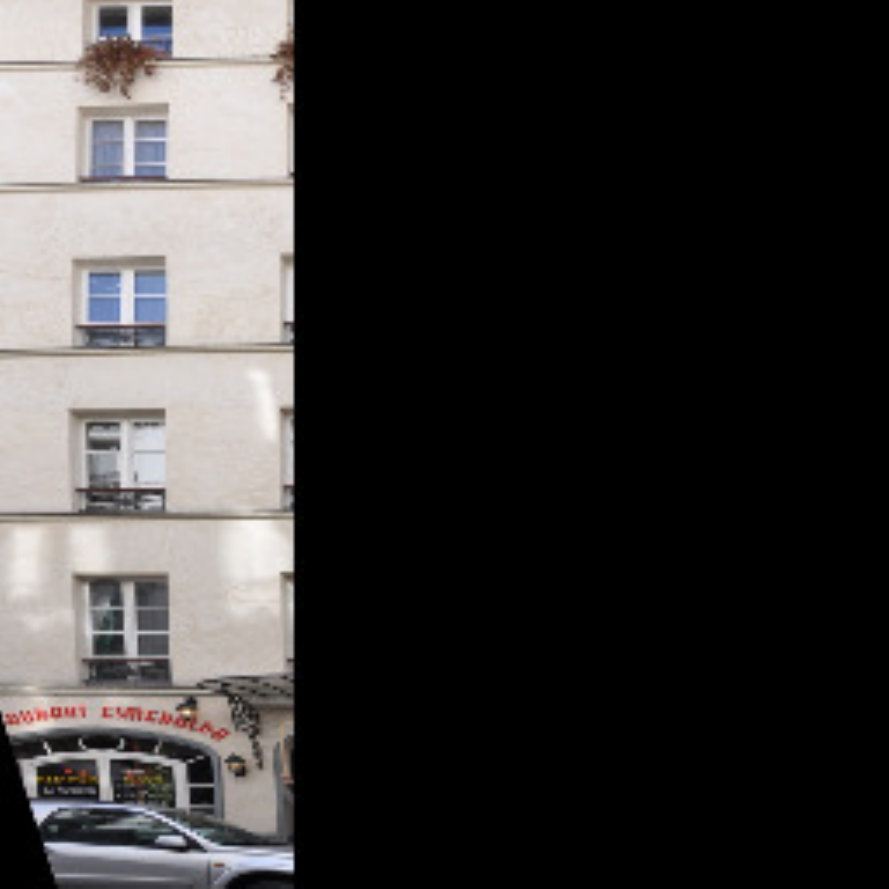} \hspace{0.25cm} & \hspace{0.25cm}
  \includegraphics[width=1.1in]{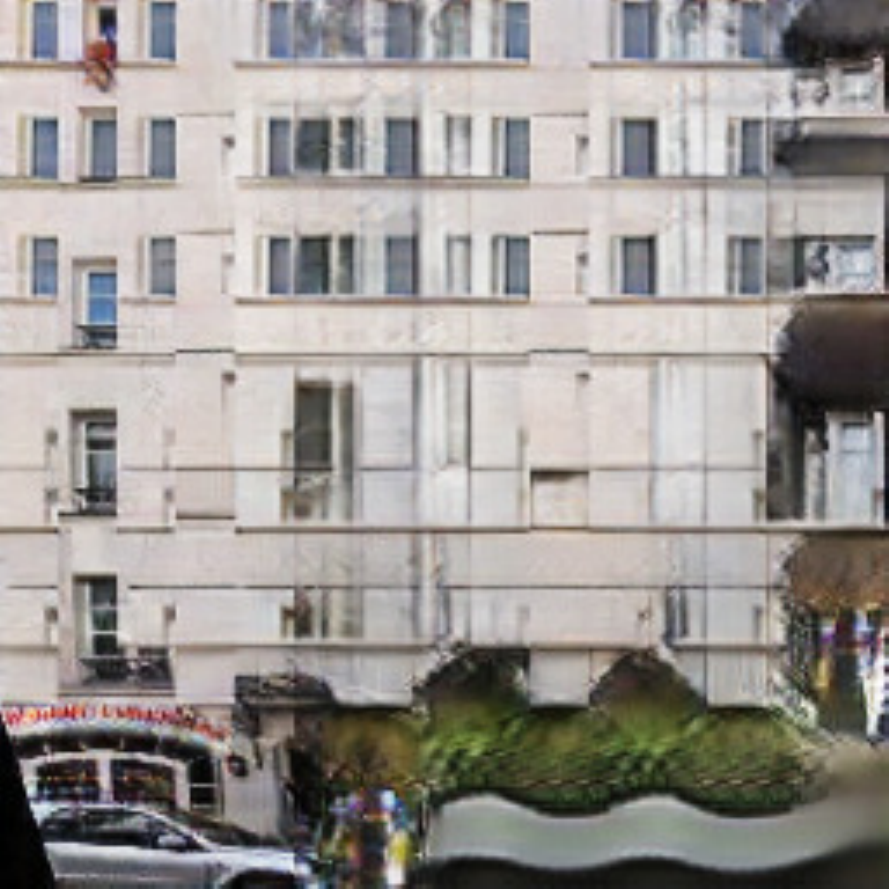} \hspace{0.25cm} & \hspace{0.25cm}
  \includegraphics[width=1.1in]{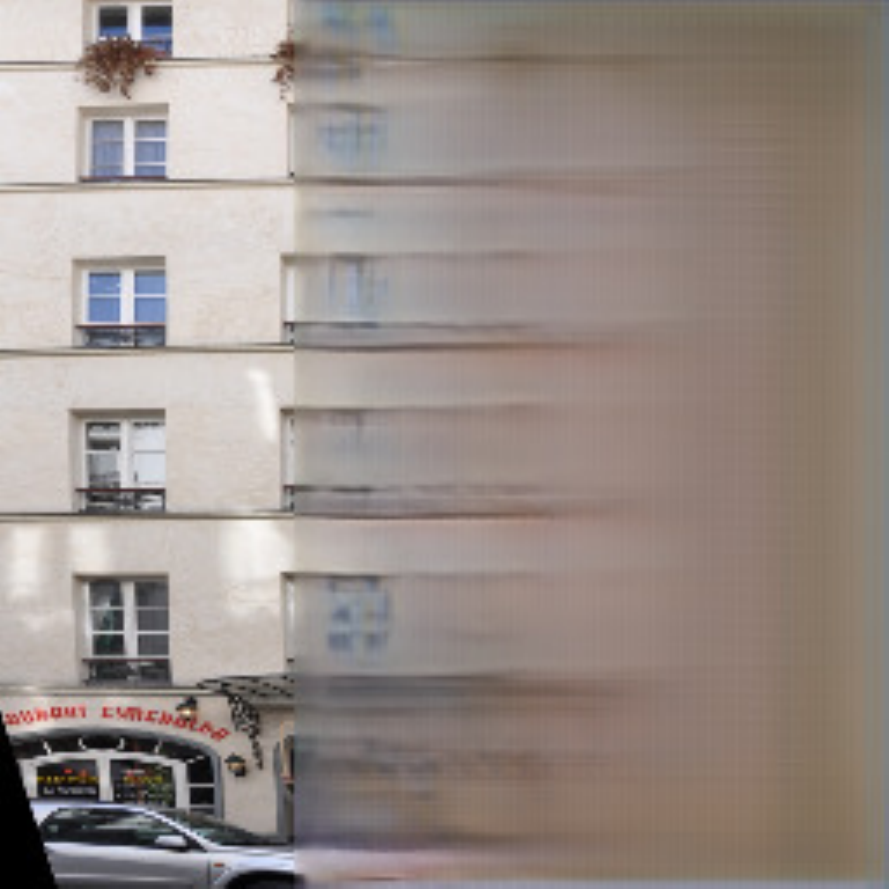} \\
  original image $x^*$ & partial image $x_{\text{part}}$ & completion $\hat{x}$ (ours) & completion $\hat{x}$ (baseline)
\end{tabular}
\caption{The task is to complete the partial image $x_{\text{part}}$ (middle left) into an image that is close to the original image $x^*$ (left). By incorporating programmatic structure into generative models, the completion (middle right) is able to substantially outperform a state-of-the-art baseline~\cite{glcic} (right).}
\label{fig:introcomparison}
\vspace{-0.05in}
\end{figure*}

\section{Introduction}

There has been much interest recently in generative models, following the introduction of both variational autoencoders (VAEs)~\cite{kingma2013auto} and generative adversarial networks (GANs)~\cite{goodfellow2014generative}. These models have successfully been applied to a range of tasks, including image generation~\cite{radford2015unsupervised}, image completion~\cite{glcic}, texture synthesis~\cite{jetchev2017texture,xian2018texturegan}, sketch generation~\cite{ha2017neural}, and music generation~\cite{dieleman2018challenge}.

Despite their successes, generative models still have difficulty capturing global structure. For example, consider the image completion task in Figure~\ref{fig:introcomparison}. The original image (left) is of a building, for which the global structure is a 2D repeating pattern of windows. Given a partial image (middle left), the goal is to predict the completion of the image. As can be seen, a state-of-the-art image completion algorithm has trouble reconstructing the original image (right)~\cite{glcic}.
\footnote{The baseline model performs particularly poorly since our dataset is small. We show on synthetic data that our approach is significantly better even when a large amount of data is available.}
Real-world data often contains such global structure, including repetitions, reflectional or rotational symmetry, or even more complex patterns.

In the past few years, \emph{program synthesis}~\cite{solar2006combinatorial} has emerged as a promising approach to capturing patterns in data~\cite{ellis2015unsupervised,ellis2018learning,valkov2018houdini}. The idea is that simple programs can capture global structure that evades state-of-the-art deep neural networks. A key benefit of using program synthesis is that we can design the space of programs to capture different kinds of structure---e.g., repeating patterns~\cite{ellis2018learning}, symmetries, or spatial structure~\cite{deng2018probabilistic}---depending on the application domain. The challenge is that for the most part, existing approaches have synthesized programs that operate directly over raw data. Since programs have difficulty operating over perceptual data, existing approaches have largely been limited to very simple data---e.g., detecting 2D repeating patterns of simple shapes~\cite{ellis2018learning}.

We propose to address these shortcomings by synthesizing programs that represent the underlying structure of high-dimensional data. In particular, we decompose programs into two parts: (i) a \emph{sketch} $s\in S$ that represents the skeletal structure of the program~\cite{solar2006combinatorial}, with \emph{holes} that are left unimplemented, and (ii) \emph{components} $c\in C$ that can be used to fill these holes. We consider \emph{perceptual components}---i.e., holes in the sketch are filled with raw perceptual data. For example, the program
\begin{align*}
\includegraphics[width=2in]{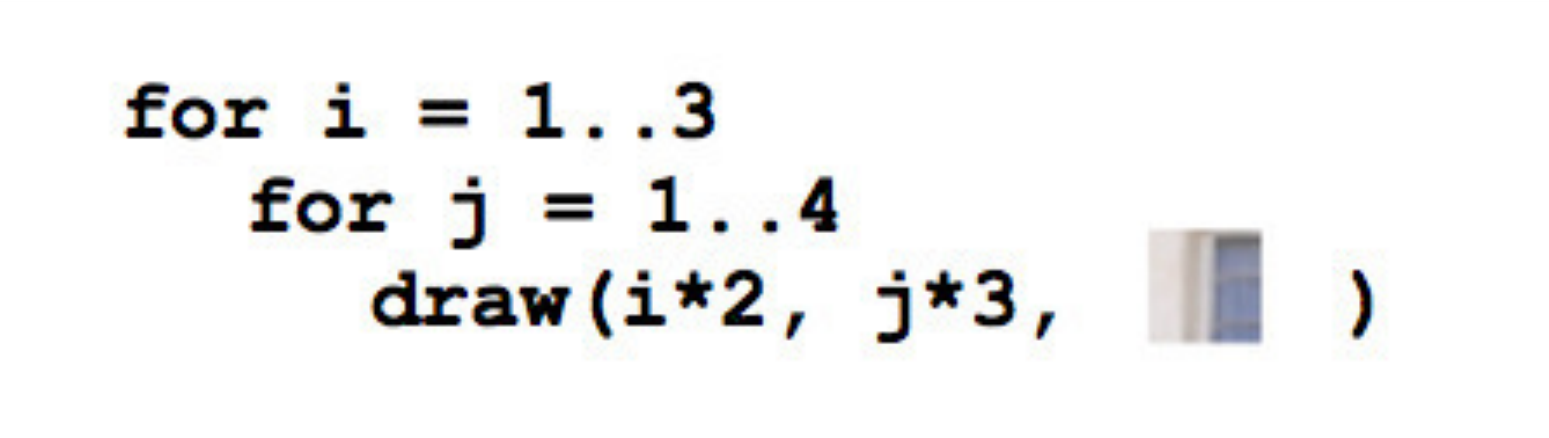}
\end{align*}
represents the structure in the original image $x^*$ in Figure~\ref{fig:introcomparison} (left). The black text is the sketch, and the component is a sub-image taken from the given partial image. Then, the \texttt{draw} function renders the given sub-image at the given position. We call a sketch whose holes are filled with perceptual components a \emph{neurosymbolic program}.

Building on these ideas, we propose an approach called \emph{program-synthesis (guided) generative models} (PS-GM) that combines neurosymbolic programs representing global structure with state-of-the-art deep generative models. By incorporating programmatic structure, PS-GM substantially improves the quality of these state-of-the-art models. As can be seen, the completion produced using PS-GM (middle right of Figure~\ref{fig:introcomparison}) substantially outperforms the baseline.

We show that PS-GM can be used for both generation from scratch and for image completion. The generation pipeline is shown in Figure~\ref{fig:generationexample}. At a high level, PS-GM for generation operates in two phases:
{\setlength{\parskip}{0pt}
\begin{itemize}
\setlength{\itemsep}{0pt}
\item First, it generates  a program that represents the global structure in the image to be generated. In particular, it generates a program $P=(s,c)$ representing the latent global structure in the image (left in Figure~\ref{fig:generationexample}), where $s$ is a sketch and $c$ is a perceptual component.
\item Second, our algorithm executes $P$ to obtain a \emph{structure rendering} $x_{\text{struct}}$ representing the program as an image (middle of Figure~\ref{fig:generationexample}). Then, our algorithm uses a deep generative model to complete $x_{\text{struct}}$ into a full image (right of Figure~\ref{fig:generationexample}). The structure in $x_{\text{struct}}$ helps guide the deep generative model towards images that preserve the global structure.
\end{itemize}
The image-completion pipeline (see Figure~\ref{fig:completionexample}) is similar.}

Training these models end-to-end is challenging, since a priori, ground truth global structure is unavailable. Furthermore, representative global structure is very sparse, so approaches such as reinforcement learning do not scale. Instead, we leverage domain-specific program synthesis algorithms to produce examples of programs that represent global structure of the training data. In particular, we propose a synthesis algorithm tailored to the image domain, which extracts programs with nested for-loops that can represent multiple 2D repeating patterns in images. Then, we use these example programs as supervised training data.

Our programs can capture rich spatial structure in the training data. For example, in Figure~\ref{fig:generationexample}, the program structure encodes a repeating structure of 0's and 2's on the whole image, and a separate repeating structure of 3's on the right-hand side of the image. Furthermore, in Figure~\ref{fig:introcomparison}, the generated image captures the idea that the repeating pattern of windows does not extend to the bottom portion of the image.

\vspace{-0.1in}
\paragraph{Contributions.}

We propose an architecture of generative models that incorporates programmatic structure, as well as an algorithm for training these models (Section~\ref{sec:model}). Our learning algorithm depends on a domain-specific program synthesis algorithm for extracting global structure from the training data; we propose such an algorithm for the image domain (Section~\ref{sec:synth}). Finally, we evaluate our approach on synthetic data and on a real-world dataset of building facades~\cite{facadesdataset}, both on the task of generation from scratch and on generation from a partial image. We show that our approach substantially outperforms several state-of-the-art deep generative models (Section~\ref{sec:exp}).

\begin{figure*}
\centering
\begin{tabular}{ccc}
  \includegraphics[width=2.5in]{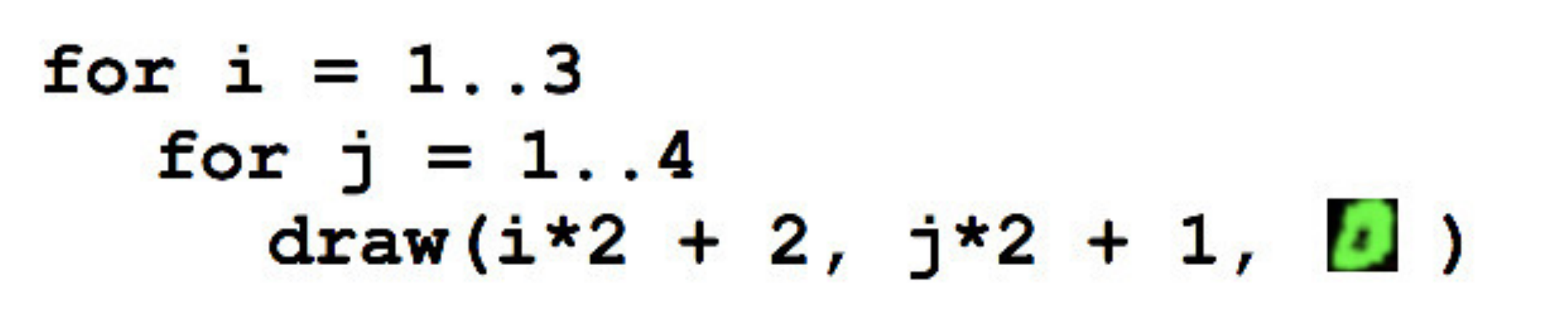} \hspace{0.25cm} & \hspace{0.25cm}
  \includegraphics[width=1.4in]{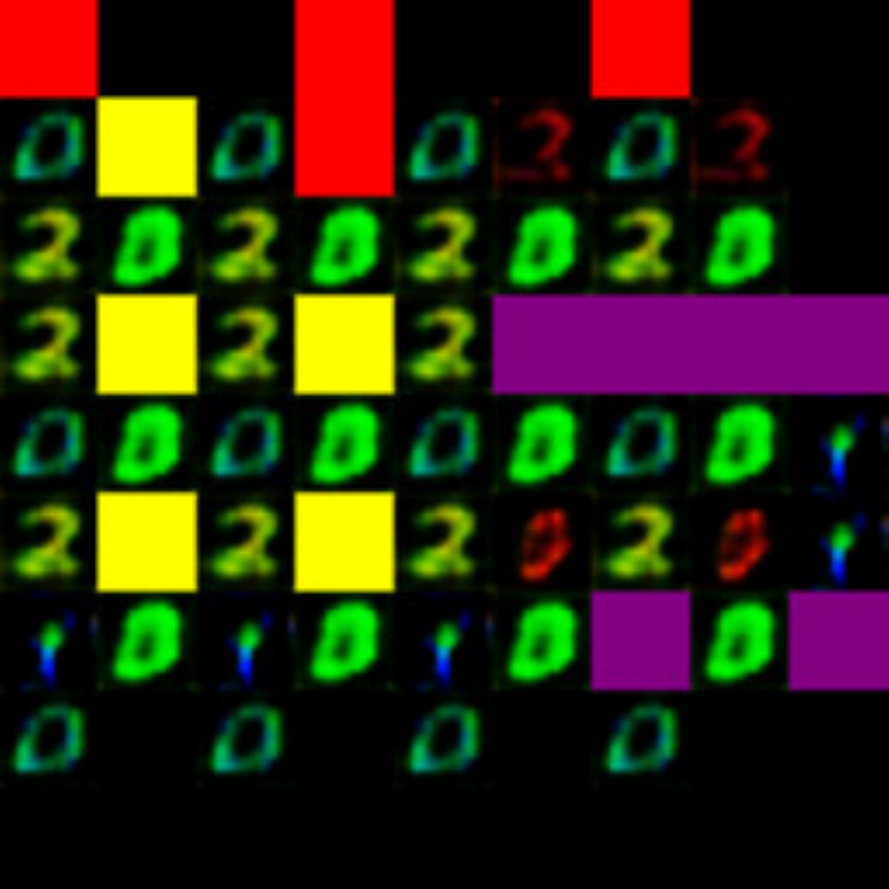} \hspace{0.25cm} & \hspace{0.25cm}
  \includegraphics[width=1.4in]{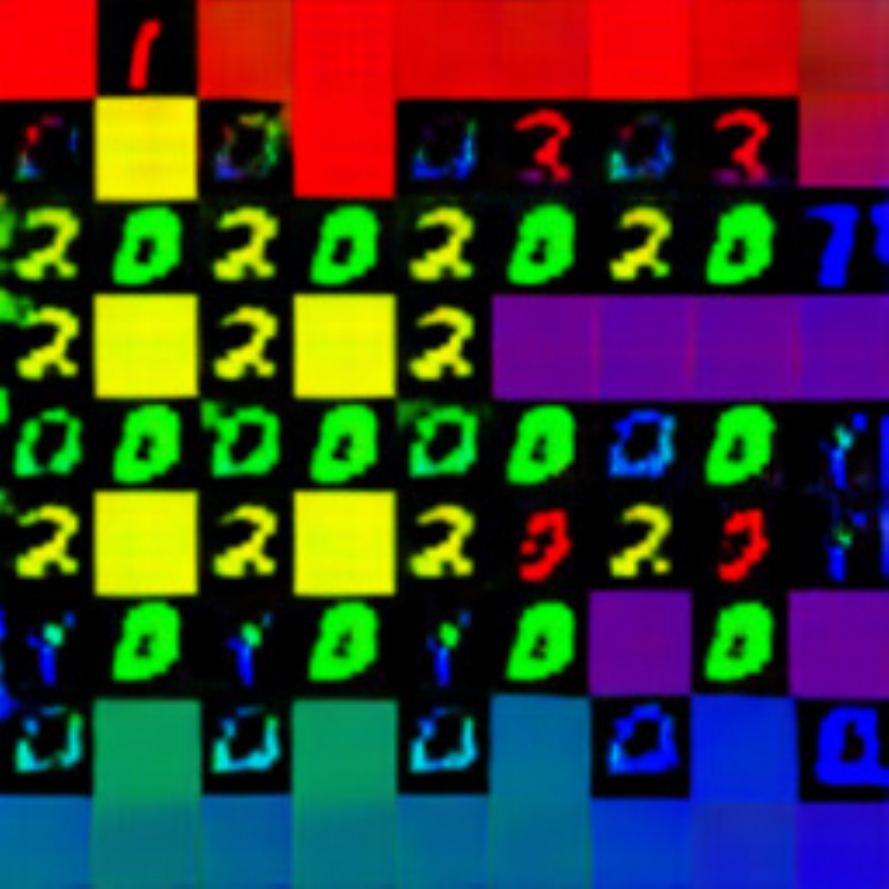} \\
  for loop from sampled program $P$ & structure rendering $x_{\text{struct}}$ & completed image $x$
\end{tabular}
\caption{Our image generation pipeline consists of the following steps: (i) Our generative model samples a latent vector $z\sim p(z)$, and samples a program $P=(s,c)\sim p_{\phi}(s,c\mid z)$ (left). (ii) Our model executes $P$ to obtain a rendering of the program structure $x_{\text{struct}}$ (middle). (iii) Our model samples a completion $x\sim p_{\theta}(x\mid s,c)$ of $x_{\text{struct}}$ into a full image (right).}
\label{fig:generationexample}
\end{figure*}

\begin{figure*}[ht]
\centering
\begin{tabular}{ccc}
  \hspace{0.3in}\includegraphics[width=1.4in]{images-intro-facades-samplefacadethird.pdf}\hspace{0.3in} \hspace{0.05cm} & \hspace{0.05cm}
  \includegraphics[width=2in]{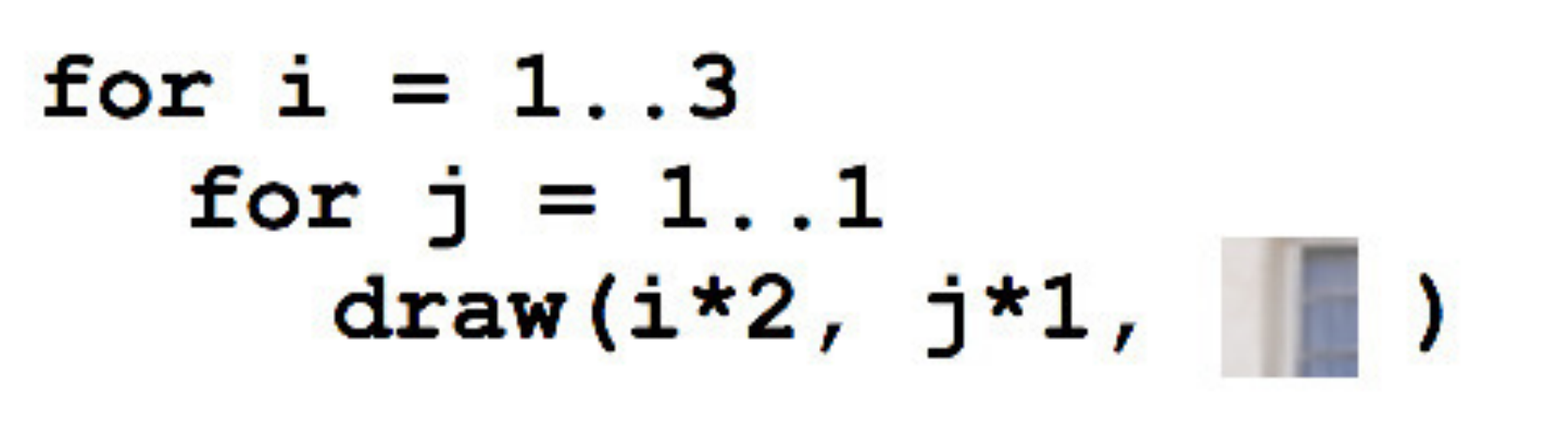} \hspace{0.05cm} & \hspace{0.05cm}
  \includegraphics[width=2in]{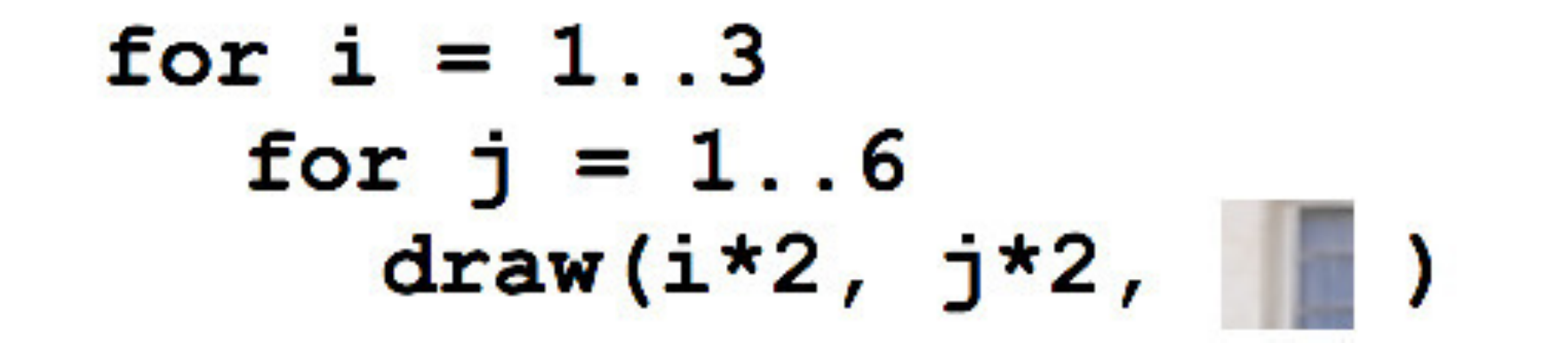} \\
  partial image $x_{\text{part}}$ & synthesized program $P_{\text{part}}$ & extrapolated program $\hat{P}$ \\\\
  \includegraphics[width=1.4in]{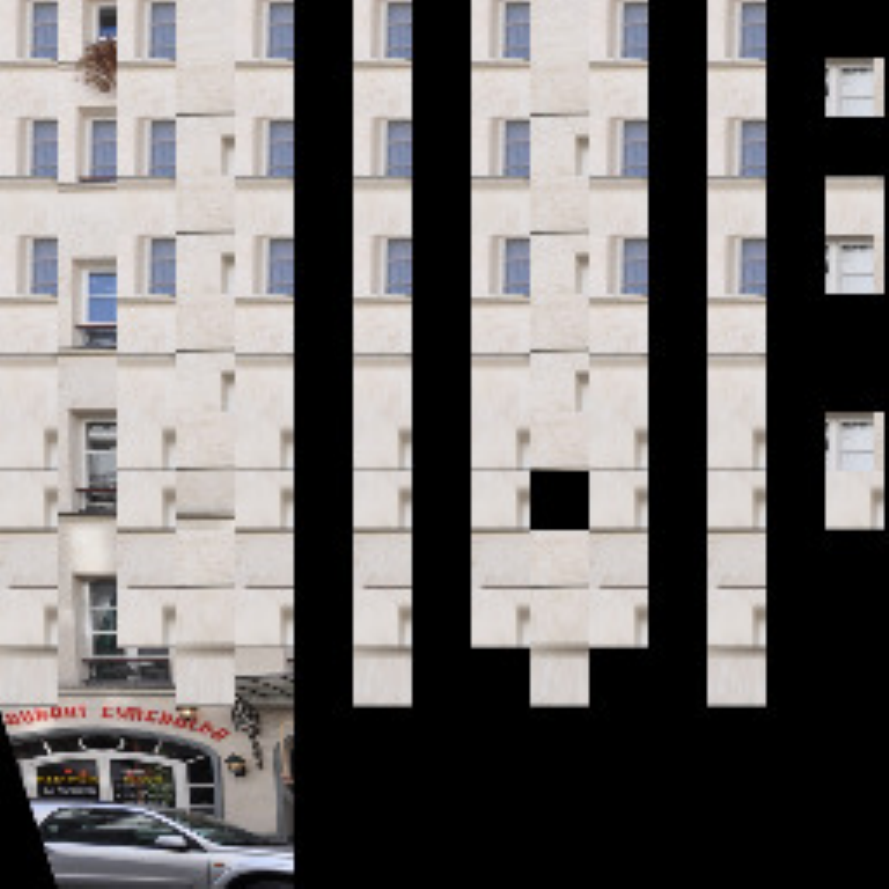} \hspace{0.05cm} & \hspace{0.05cm}
  \includegraphics[width=1.4in]{images-intro-facades-samplethirdextrapcomplete.pdf} \hspace{0.05cm} & \hspace{0.05cm}
  \includegraphics[width=1.4in]{images-intro-facades-samplefacade.pdf} \\
  structure rendering $\hat{x}_{\text{struct+part}}$ & completion $\hat{x}$ (ours) & original image $x^*$
\end{tabular}
\caption{Our image completion pipeline consists of the following steps: (i) Given a partial image $x_{\text{part}}$ (top left), our program synthesis algorithm (Section~\ref{sec:synth}) synthesizes a program $P_{\text{part}}$ representing the structure in the partial image (top middle). (ii) Our model $f$ extrapolates $P_{\text{part}}$ to a program $\hat{P}=f(P_{\text{part}})$ representing the structure of the whole image. (iii) Our model executes $\hat{P}$ to obtain a rendering of the program structure $\hat{x}_{\text{struct+part}}$ (bottom left). (iv) Our model completes $\hat{x}_{\text{struct+part}}$ into an image $\hat{x}$ (bottom middle), which resembles the original image $x^*$ (bottom right).}
\label{fig:completionexample}
\vspace{-0.1in}
\end{figure*}

\vspace{-0.1in}

\paragraph{Related work.}

There has been growing interest in applying program synthesis to machine learning, for purposes of interpretability~\cite{wang2015falling,verma2018programmatically}, safety~\cite{bastani2018verifiable}, and lifelong learning~\cite{valkov2018houdini}. Most relevantly, there has been interest in using programs to capture structure that deep learning models have difficulty representing~\cite{lake2015human,ellis2015unsupervised,ellis2018learning,pu2018selecting}. For instance, \citet{ellis2015unsupervised} proposes an unsupervised learning algorithm for capturing repeating patterns in simple line drawings; however, not only are their domains simple, but they can only handle a very small amount of noise. Similarly, \citet{ellis2018learning} captures 2D repeating patterns of simple circles and polygons; however, rather than synthesizing programs with perceptual components, they learn a simple mapping from images to symbols as a preprocessing step. The closest work we are aware of is~\citet{valkov2018houdini}, which synthesizes programs with \emph{neural components} (i.e., components implemented as neural networks); however, their application is to lifelong learning, not generation, and to learning with supervision (labels) rather than to unsupervised learning of structure.

Additionally, there has been work extending neural module networks~\cite{andreas2016neural} to generative models~\cite{deng2018probabilistic}. These algorithms essentially learn a collection of neural components that can be composed together based on hierarchical structure. However, they require that the structure be available (albeit in natural language form) both for training the model and for generating new images.

Finally, there has been work incorporating spatial structure into generative models for generating textures~\cite{jetchev2017texture}; however, their work only handles a single infinite repeating 2D pattern. In contrast, we can capture a rich variety of spatial patterns parameterized by a space of programs. For example, the image in Figure~\ref{fig:introcomparison} generated by our technique contains different repeating patterns in different parts of the image.

\section{Generative Models with Latent Structure}
\label{sec:model}

We describe our proposed architecture for generative models that incorporate programmatic structure. For most of this section, we focus on generation; we discuss how we adapt these techniques to image completion at the end. We illustrate our generation pipeline in Figure~\ref{fig:generationexample}.

Let $p_{\theta,\phi}(x)$ be a distribution over a space $\X$ with unknown parameters $\theta,\phi$ that we want to estimate. We study the setting where $x$ is generated based on some latent structure, which consists of a \emph{program sketch} $s\in\SK$ and a \emph{perceptual component} $c\in\C$, and where the structure is in turn generated conditioned on a latent vector $z\in\Z$---i.e.,
\begin{align*}
p_{\theta,\phi}(x)=\int_{\Z}\int_{\C}\sum_{s\in\SK}p_{\theta}(x\mid s,c)p_{\phi}(s,c\mid z)p(z)dcdz.
\end{align*}
Figure~\ref{fig:generationexample} shows an example of a sampled program $P=(s,c)\sim p_{\phi}(s,c\mid z)$ (left), and an example of the sampled completion $x\sim p_{\theta}(x\mid s,c)$ (right). To sample a completion, our model executes $P$ to obtain a structure rendering $x_{\text{struct}}=\text{eval}(P)$ (middle), and then samples a completion based on $x_{\text{struct}}$---i.e., $p_{\theta}(x\mid s,c)=p_{\theta}(x\mid x_{\text{struct}})$.

We now describe our algorithm for learning the parameters $\theta,\phi$ of $p_{\theta,\phi}$, followed by a description of our choices of architecture for $p_{\phi}(s,c\mid z)$ and $p_{\theta}(x\mid s,c)$.

\paragraph{Learning algorithm.}

Given training data $\{x^{(i)}\}_{i=1}^n\subseteq\X$, where $x^{(i)}\sim p_{\theta,\phi}(x)$, the maximum likelihood estimate is
\begin{align*}
\theta_{\text{MLE}}^*,\phi_{\text{MLE}}^*=\operatorname*{\arg\max}_{\theta,\phi}\sum_{i=1}^n\log p_{\theta,\phi}(x^{(i)}).
\end{align*}
Since $\log p_{\theta,\phi}(x)$ is intractable to optimize, we use an approach based on the variational autoencoder (VAE). In particular, we use a variational distribution
\begin{align*}
q_{\tilde{\phi}}(s,c,z\mid x)=q_{\tilde{\phi}}(z\mid s,c)q(s,c\mid x),
\end{align*}
which has parameters $\tilde{\phi}$. Then, we optimize $\tilde{\phi}$ while simultaneously optimizing $\theta,\phi$. Using $q_{\tilde{\phi}}(s,c,z\mid x)$, the \emph{evidence lower bound} on the log-likelihood is
\begin{align}
\log p_{\theta,\phi}(x)
\ge~&\mathbb{E}_{q(s,c,z\mid x)}[\log p_{\theta}(x\mid s,c)] \nonumber \\
&-D_{\text{KL}}(q(s,c,z\mid x)\kl p_{\phi}(s,c\mid z)p(z)) \nonumber \\
\label{eqn:variational1}
=~&\mathbb{E}_{q(s,c\mid x)}[\log p_{\theta}(x\mid s,c)] \\
&+\mathbb{E}_{q(s,c\mid x),q_{\tilde{\phi}}(z\mid s,c)}[\log p_{\phi}(s,c\mid z)] \nonumber \\
&-\mathbb{E}_{q(s,c\mid x)}[D_{\text{KL}}(q_{\tilde{\phi}}(z\mid s,c)\kl p(z))] \nonumber \\
&-H(q(s,c\mid x)), \nonumber
\end{align}
where $D_{\text{KL}}$ is the KL divergence and $H$ is information entropy. Thus, we can approximate $\theta^*,\phi^*$ by optimizing the lower bound (\ref{eqn:variational1}) instead of $\log p_{\theta,\phi}(x)$. However, (\ref{eqn:variational1}) remains intractable since we are integrating over all program sketches $s\in\SK$ and perceptual components $c\in\C$. Using sampling to estimate these integrals would be very computationally expensive. Instead, we propose an approach that uses a single point estimate of $s_x\in\SK$ and $c_x\in\C$ for each $x\in\X$, which we describe below.

\paragraph{Synthesizing structure.}

For a given $x\in \X$, we use \emph{program synthesis} to infer a \emph{single} likely choice $s_x\in\SK$ and $c_x\in\C$ of the latent structure. The program synthesis algorithm must be tailored to a specific domain; we propose an algorithm for inferring for-loop structure in images in Section~\ref{sec:synth}. Then, we use these point estimates in place of the integrals over $\SK$ and $\C$---i.e., we assume that
\begin{align*}
q(s,c\mid x)=\delta(s-s_x)\delta(c-c_x),
\end{align*}
where $\delta$ is the Dirac delta function. Plugging into (\ref{eqn:variational1}) gives
\begin{align}
\label{eqn:variational2}
\log p_{\theta,\phi}(x)
\ge~&\log p_{\theta}(x\mid s_x,c_x) \\
&+\mathbb{E}_{q_{\tilde{\phi}}(z\mid s_x,c_x)}[\log p_{\phi}(s_x,c_x\mid z)] \nonumber \\
&-D_{\text{KL}}(q_{\tilde{\phi}}(z\mid s_x,c_x)\kl p(z)). \nonumber
\end{align}
where we have dropped the degenerate terms $\log\delta(s-s_x)$ and $\log\delta(c-c_x)$ (which are constant with respect to the parameters $\theta,\phi,\tilde{\phi}$). As a consequence, (\ref{eqn:variational1}) decomposes into two parts that can be straightforwardly optimized---i.e.,
\begin{align*}
\log p_{\theta,\phi}(x)\ge~&\mathcal{L}(\theta;x)+\mathcal{L}(\phi,\tilde{\phi};x) \\
\mathcal{L}(\theta;x)=~&\log p_{\theta}(x\mid s_x,c_x) \\
\mathcal{L}(\phi,\tilde{\phi};x)=~&\mathbb{E}_{q_{\tilde{\phi}}(z\mid s_x,c_x)}[\log p_{\phi}(s_x,c_x\mid z)] \\
&-D_{\text{KL}}(q_{\tilde{\phi}}(z\mid s_x,c_x)\kl p(z)),
\end{align*}
where we can optimize $\theta$ and $\phi,\tilde{\phi}$ independently:
\begin{align*}
\theta^*&=\operatorname*{\arg\max}_{\theta}\sum_{i=1}^n\mathcal{L}(\theta;x^{(i)}) \\
\phi^*,\tilde{\phi}^*&=\operatorname*{\arg\max}_{\phi,\tilde{\phi}}\sum_{i=1}^n\mathcal{L}(\phi,\tilde{\phi};x^{(i)}).
\end{align*}

\paragraph{Latent structure VAE.}

Note that $\mathcal{L}(\phi,\tilde{\phi};x)$ is exactly equal to the objective of a VAE, where $q_{\tilde{\phi}}(z\mid s,c)$ is the encoder and $p_{\phi}(s,c\mid z)$ is the decoder---i.e., learning the distribution over latent structure is equivalent to learning the parameters of a VAE. The architecture of this VAE depends on the representation of $s$ and $c$. In the case of for-loop structure in images, we use a sequence-to-sequence VAE.

\paragraph{Generating data with structure.}

The term $\mathcal{L}(\theta;x)$ corresponds to learning a probability distribution (conditioned on the latent structure $s$ and $c$)---e.g., we can estimate this distribution using another VAE. As before, the architecture of this VAE depends on the representation of $s$ and $c$. Rather than directly predicting $x$ based on $s$ and $c$, we can leverage the program structure more directly by first executing the program $P=(s,c)$ to obtain its output $x_{\text{struct}}=\text{eval}(P)$, which we call a \emph{structure rendering}. In particular, $x_{\text{struct}}$ is a more direct representation of the global structure represented by $P$, so it is often more suitable to use as input to a neural network. The middle of Figure~\ref{fig:generationexample} shows an example of a structure rendering for the program on the left. Then, we can train a model $p_{\theta}(x\mid s,c)=p_{\theta}(x\mid x_{\text{struct}})$.

In the case of images, we use a VAE with convolutional layers for the encoder $q_{\phi}$ and transpose convolutional layers for the decoder $p_{\theta}$.  Furthermore, instead of estimating the entire distribution $p_{\theta}(x\mid s,c)$, we also consider two non-probabilistic approaches that directly predict $x$ from $x_{\text{struct}}$, which is an image completion problem. We can solve this problem using GLCIC, a state-of-the-art image completion model~\cite{glcic}. We can also use CycleGAN~\cite{cyclegan}, which solves the more general problem of mapping a training set of structured renderings $\{x_{\text{struct}}\}$ to a training set of completed images $\{x\}$. \footnote{Pix2Pix~\cite{isola2017image} may seem more appropriate since it takes training pairs $(x_{\text{struct}},x)$, but CycleGAN outperformed it.}

\paragraph{Image completion.}

In image completion, we are given a set of training pairs $(x_{\text{part}},x^*)$, and the goal is to learn a model that predicts the complete image $x^*$ given a partial image $x_{\text{part}}$. Compared to generation, our likelihood is now conditioned on $x_{\text{part}}$---i.e., $p_{\theta,\phi}(x\mid x_{\text{part}})$. Now, we describe how we modify each of our two models $p_{\theta}(x\mid s,c)$ and $p_{\phi}(s,c\mid z)$ to incorporate this extra information.

First, the programmatic structure is no longer fully latent, since we can observe partial programmatic structure in $x_{\text{part}}$. In particular, we can leverage our program synthesis algorithm to help perform completion. We first synthesize programs $P^*$ and $P_{\text{part}}$ representing the global structure in $x^*$ and $x_{\text{part}}$, respectively. Then, we can train a model $f$ that predicts $P^*$ given $P_{\text{part}}$---i.e., it extrapolates $P_{\text{part}}$ to a program $\hat{P}=f(P_{\text{part}})$ representing the structure of the whole image. Thus, unlike generation, where we sample a program $\hat{P}=(s,c)\sim p_{\phi}(s,c\mid z)$, we use the extrapolated program $\hat{P}=f(P_{\text{part}})$.

The second model $p_{\theta}(x\mid s,c)$ for the most part remains the same, except when we execute $\hat{P}=(s,c)$ to obtain a structure rendering $x_{\text{struct}}$, we render onto the partial image $x_{\text{part}}$ instead of onto a blank image to obtain the final rendering $x_{\text{struct+part}}$. Then, we complete the structure rendering $x_{\text{struct+part}}$ into a prediction of the full image $\hat{x}$ as before (i.e., using a VAE, GLCIC, or CycleGAN).

Our image completion pipeline is shown in Figure~\ref{fig:completionexample}, including the given partial image (top left), the program $P_{\text{part}}$ synthesized from the partial image (top middle), the extrapolated program $\hat{P}$ (top right), the structure rendering $x_{\text{struct+part}}$ (bottom left), and the predicted full image $\hat{x}$ (bottom middle).

\section{Synthesizing Programmatic Structure}
\label{sec:synth}

\paragraph{Image representation.}

Since the images we work with are very high dimensional, for tractability, we assume that each image $x\in\mathbb{R}^{NM\times NM}$ is divided into a grid containing $N$ rows and $N$ columns, where each grid cell has size $M\times M$ pixels (where $M\in\mathbb{N}$ is a hyperparameter). For example, this grid structure is apparent in Figure~\ref{fig:completionexample} (top right), where $N=15$, $M=17$ and $N = 9$, $M = 16$ for the facade and synthetic datasets respectively. For $t,u\in[N]=\{1,...,N\}$, we let $x_{tu}\in\mathbb{R}^{M\times M}$ denote the sub-image at the $(t,u)$ position in the $N\times N$ grid.

\paragraph{Program grammar.}

Given this structure, we consider programs that draw 2D repeating patterns of $M\times M$ sub-images on the grid. More precisely, we consider programs
\begin{align*}
P=((s_1,c_1),...,(s_k,c_k))\in(S\times C)^k
\end{align*}
that are length $k$ lists of pairs consisting of a sketch $s\in S$ and a perceptual component $c\in C$; here, $k\in\mathbb{N}$ is a hyperparameter.
\footnote{So far, we have assumed that a program is a single pair $P=(s,c)$, but the generalization to a list of pairs is straightforward.}
A sketch $s\in S$ has form
\begin{align*}
s=~
&\textbf{for}~(i,j)\in\{1,...,n\}\times\{1,...,n'\}~\textbf{do} \\
&\hspace{0.2in}~\text{draw}(a\cdot i+b,~a'\cdot j+b',~\textbf{??}) \\
&\textbf{end for}
\end{align*}
where $n,a,b,n',a',b'\in\mathbb{N}$ are undetermined parameters that must satisfy $a\cdot n+b\le N$ and $a'\cdot n'+b'\le N$, and where $\textbf{??}$ is a hole to be filled by a perceptual component, which is an $M\times M$ sub-image $c\in\mathbb{R}^{M\times M}$.
\footnote{For colored images, we have $I\in\mathbb{R}^{M\times M\times3}$.}
Then, upon executing the $(i,j)$ iteration of the for-loop, the program renders sub-image $I$ at position $(t,u)=(a\cdot i+b,a'\cdot j+b')$ in the $N\times N$ grid. Figure~\ref{fig:completionexample} (top middle) shows an example of a sketch $s$ where its hole is filled with a sub-image $c$, and Figure~\ref{fig:completionexample} (bottom left) shows the image rendered upon executing $P=(s,c)$. Figure~\ref{fig:generationexample} shows another such example.

\paragraph{Program synthesis problem.}

Given a training image $x\in\mathbb{R}^{NM\times NM}$, our program synthesis algorithm outputs the parameters $n_h,a_h,b_h,n_h',a_h',b_h'$ of each sketch $s_h$ in the program (for $h\in[k]$), along with a perceptual component $c_h$ to fill the hole in sketch $s_h$. Together, these parameters define a program $P=((s_1,c_1),...,(s_k,c_k))$.

The goal is to synthesize a program that faithfully represents the global structure in $x$. We capture this structure using a boolean tensor $B^{(x)}\in\{0,1\}^{N\times N\times N\times N}$, where
\begin{align*}
B^{(x)}_{t,u,t',u'}=
\begin{cases}
1&\text{if}~d(x_{tu},x_{t'u'})\le\epsilon \\
0&\text{otherwise},
\end{cases}
\end{align*}
where $\epsilon\in\mathbb{R}_+$ is a hyperparameter, and $d(I,I')$ is a distance metric between on the space of sub-images. In our implementation, we use a weighted sum of earthmover's distance between the color histograms of $I$ and $I'$, and the number of SIFT correspondences between $I$ and $I'$.

Additionally, we associate a boolean tensor with a given program $P=((s_1,c_1),...,(s_k,c_k))$. First, for a sketch $s\in S$ with parameters $a,b,n,a',b',n'$, we define
\begin{align*}
\text{cover}(s)=\{(a\cdot i+b,a'\cdot j+b')\mid i\in[n],j\in[n']\},
\end{align*}
i.e., the set of grid cells where sketch renders a sub-image upon execution. Then, we have
\begin{align*}
B_{t,u,t',u'}^{(s)}=
\begin{cases}
1&\text{if}~(t,u),(t',u')\in\text{cover}(s) \\
0&\text{otherwise},
\end{cases}
\end{align*}
i.e., $B_{t,u,t',u'}^{(s)}$ indicates whether the sketch $s$ renders a sub-image at both of the grid cells $(t,u)$ and $(t',u')$. Then,
\begin{align*}
B^{(P)}=B^{(s_1)}\vee...\vee B^{(s_k)},
\end{align*}
where the disjunction of boolean tensors is defined elementwise. Intuitively, $B^{(P)}$ identifies the set of pairs of grid cells $(t,u)$ and $(t',u')$ that are equal in the image rendered upon executing each pair $(s,c)$ in $P$.
\footnote{Note that the covers of different sketches in $P$ can overlap; we find that ignoring this overlap does not significantly impact our results.}

Finally, our program synthesis algorithm aims to solve the following optimization problem:
\begin{align}
\label{eqn:synthprob}
P^*&=\operatorname*{\arg\max}_P\ell(P;x) \\
\ell(P;x)&=\|B^{(x)}\wedge B^{(P)}\|_1+\lambda\|\neg B^{(x)}\wedge\neg B^{(P)}\|_1, \nonumber
\end{align}
where $\wedge$ and $\neg$ are applied elementwise, and $\lambda\in\mathbb{R}_+$ is a hyperparameter. In other words, the objective of (\ref{eqn:synthprob}) is the number of true positives (i.e., entries where $B^{(P)}=B^{(x)}=1$), and the number of false negatives (i.e., entries where $B^{(P)}=B^{(x)}=0$), and computes their weighted sum. Thus, the objective of (\ref{eqn:synthprob}) measures for how well $P$ represents the global structure of $x$.

For tractability, we restrict the search space in (\ref{eqn:synthprob}) to programs of the form
\begin{align*}
P&=((s_1,c_1),...,(s_k,c_k))\in(S\times\hat{C})^k \\
\hat{C}&=\{x_{tu}\mid t,u\in[N]\}.
\end{align*}
In other words, rather than searching over all possible sub-images $c\in\mathbb{R}^{M\times M}$, we only search over the sub-images that actually occur in the training image $x$.  This may lead to a slightly sub-optimal solution, for example, in cases where the optimal sub-image to be rendered is in fact an interpolation between two similar but distinct sub-images in the training image.  However, we found that in practice this simplifying assumption still produced viable results.

\begin{algorithm}[t]
\begin{algorithmic}
\STATE {\bf Input:} $X=\{x\}\subseteq\mathbb{R}^{NM\times NM}$
\STATE $\hat{C}\gets\{x_{tu}\mid t,u\in[N]\}$
\STATE $P\gets\varnothing$
\FOR{$h\in\{1,...,k\}$}
\STATE $s_h,c_h=\operatorname*{\arg\max}_{(s,c)\in S\times\hat{C}}\ell(P_{h-1}\cup\{(s,c)\};x)$
\STATE $P\gets P\cup\{(s_h,c_h)\}$
\ENDFOR
\STATE {\bf Output:} $P$
\end{algorithmic}
\caption{Synthesizes a program $P$ representing the global structure of a given image $x\in\mathbb{R}^{NM\times NM}$.}
\label{alg:synth}
\end{algorithm}

\paragraph{Program synthesis algorithm.}

Exactly optimizing (\ref{eqn:synthprob}) is in general an NP-complete problem. Thus, our program synthesis algorithm uses a partially greedy heuristic. In particular, we initialize the program to $P=\varnothing$. Then, on each iteration, we enumerate all pairs $(s,c)\in S\times\hat{C}$ and determine the pair $(s_h,c_h)$ that most increases the objective in (\ref{eqn:synthprob}), where $\hat{C}$ is the set of all sub-images $x_{tu}$ for $t,u\in[N]$. Finally, we add $(s_h,c_h)$ to $P$. We show the full algorithm in Algorithm~\ref{alg:synth}. We have the following straightforward guarantee:
\begin{theorem}
If $\lambda=0$, then $\ell(\hat{P};x)\ge(1-e^{-1})\ell(P^*;x)$, where $\hat{P}$ is returned by Algorithm~\ref{alg:synth} and $P^*$ solves (\ref{eqn:synthprob}).
\end{theorem}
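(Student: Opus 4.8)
The plan is to recognize that when $\lambda=0$ the objective $\ell(\cdot\,;x)$ is a monotone submodular set function and that Algorithm~\ref{alg:synth} is precisely the classical greedy algorithm for maximizing it under a cardinality constraint; the bound then follows from the Nemhauser--Wolsey--Fisher guarantee. First I would rewrite the objective as a coverage function. With $\lambda=0$ we have $\ell(P;x)=\|B^{(x)}\wedge B^{(P)}\|_1$, and since $B^{(P)}=B^{(s_1)}\vee\dots\vee B^{(s_k)}$ with all tensors boolean, $B^{(P)}_{t,u,t',u'}=\max_h B^{(s_h)}_{t,u,t',u'}$. Letting $U=\{(t,u,t',u'):B^{(x)}_{t,u,t',u'}=1\}$ be the (finite) universe of ``positive'' entries, and for each candidate element $e=(s,c)\in S\times\hat C$ setting $A_e=\{(t,u,t',u')\in U: B^{(s)}_{t,u,t',u'}=1\}$, one gets $\ell(P;x)=\bigl|\bigcup_{e\in P}A_e\bigr|$. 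In particular $A_e$ depends only on the sketch $s$, not on the perceptual component $c$, and $\ell(\varnothing;x)=0$.

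Next I would verify the two structural properties. Monotonicity is immediate: adding a pair to $P$ only enlarges the union $\bigcup_{e\in P}A_e$, so $\ell(\cdot\,;x)$ is nondecreasing; this also makes it harmless that Algorithm~\ref{alg:synth} always appends exactly $k$ pairs (a repeated pair has zero marginal gain and does not affect the analysis) and that the optimum $P^*$ of (\ref{eqn:synthprob}) may have ``effective length'' less than $k$. Submodularity is the standard fact for coverage functions: for $T\subseteq T'$ and any $e$, we have $A_e\setminus\bigcup_{f\in T'}A_f\subseteq A_e\setminus\bigcup_{f\in T}A_f$, so the marginal gain $\ell(T\cup\{e\};x)-\ell(T;x)$ is nonincreasing in the current set. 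Finally I would observe that Algorithm~\ref{alg:synth} is exactly the greedy algorithm for this problem, since at iteration $h$ it selects the element $(s_h,c_h)$ of maximum marginal gain over $S\times\hat C$, and the constraint is the cardinality bound $|P|\le k$.

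Putting these together, applying the classical greedy bound for monotone submodular maximization subject to a cardinality constraint yields $\ell(\hat P;x)\ge\bigl(1-(1-1/k)^k\bigr)\,\ell(P^*;x)\ge(1-e^{-1})\,\ell(P^*;x)$, where $P^*$ ranges over all programs with at most $k$ pairs, which is exactly the claim. The only care required is in the bookkeeping of the reduction---confirming that the choice of perceptual components is irrelevant to $\ell$ when $\lambda=0$ (so the ground set may be taken to be the finite set of admissible sketches, or the finite set $S\times\hat C$), and that repeated or short selections cause no issue---together with a correct statement of the $(1-1/e)$ greedy bound. There is no substantive mathematical obstacle here; essentially all the work is in matching the setup to the submodular-maximization template, so I would expect the submodularity/monotonicity verification to be the ``main'' step only in the sense that it is where one must be precise.
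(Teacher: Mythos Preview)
Your proposal is correct and follows the same approach as the paper, which simply observes that with $\lambda=0$ the objective is a coverage function and cites the standard greedy bound. If anything, you are more careful than the paper: the paper loosely says the problem is ``equivalent to set cover,'' whereas you correctly frame it as maximum coverage (monotone submodular maximization under a cardinality constraint), which is what actually yields the $(1-e^{-1})$ guarantee.
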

\begin{proof}
If $\lambda=0$, then optimizing $\ell(P;x)$ is equivalent to set cover, where the items are tuples
\begin{align*}
\{(t,u,t',u')\in[N]^4\mid B_{t,u,t',u'}^{(x)}=1\},
\end{align*}
and the sets are $(s,c)\in S\times\hat{C}$. The theorem follows from~\cite{hochbaum1997approximating}.
\end{proof}
In general, (\ref{eqn:synthprob}) is not submodular, but we find that the greedy heuristic still works well in practice.

\section{Experiments}
\label{sec:exp}

We perform two experiments---one for generation from scratch and one for image completion. We find substantial improvement in both tasks. Details on neural network architectures are in Appendix~\ref{sec:expdetailsappendix}, and additional examples for image completion are in Appendix~\ref{sec:expappendix}.

\subsection{Datasets}

\paragraph{Synthetic dataset.}

We developed a synthetic dataset based on MNIST. Each image consists of a $9\times9$ grid, where each grid cell is $16\times16$ pixels. Each grid cell is either filled with a colored MNIST digit or a solid color background. The program structure is a 2D repeating pattern of an MNIST digit; to add natural noise, we each iteration of the for-loop in a sketch $s_h$ renders different MNIST digits, but with the same MNIST label and color. Additionally, we chose the program structure to contain correlations characteristic of real-world images---e.g., correlations between different parts of the program, correlations between the program and the background, and noise in renderings of the same component. Examples are shown in Figure~\ref{fig:genexpimages}. We give details of how we constructed this dataset in Appendix~\ref{sec:expdetailsappendix}. This dataset contains 10,000 training and 500 test images.

\paragraph{Facades dataset.}

Our second dataset consists of 1855 images (1755 training, 100 testing) of building facades.\footnote{We chose a large training set since our dataset is so small.}  These images were all scaled to a size of $256\times256\times3$ pixels, and were divided into a grid of $15\times15$ cells each of size 17 or 18 pixels.  These images contain repeating patterns of objects such as windows and doors.

\subsection{Generation from Scratch}
\label{sec:expgen}

\paragraph{Experimental setup.}

We evaluate our approach PS-GM for generation from scratch on the synthetic dataset---the facades dataset was too small to produce meaningful results. As described in Section~\ref{sec:model}, we use Algorithm~\ref{alg:synth} to synthesize a program $P_x=(s_x,c_x)$ representing each training image $x\in X_{\text{train}}$. Then, we train the encoder $q_{\tilde{\phi}}(z\mid s,c)$ and the decoder $p_{\phi}(s,c\mid z)$ on the training set $\{P_x\mid x\in X\}$.

For the second stage of PS-GM (i.e., completing the structure rendering $x_{\text{struct}}$ into an image $x$), we use a variational encoder-decoder (VED)
\begin{align*}
p_{\theta}(x\mid s,c)=\int p_{\theta}(x\mid w)\cdot q_{\theta}(w\mid x_{\text{struct}})dw,
\end{align*}
where $q_{\theta}(w\mid x_{\text{struct}})$ encodes a structure rendering $x_{\text{struct}}$ into a latent vector $w$, and $p_{\theta}(x\mid w)$ decodes $w$ into a complete image $x$. We train $p_{\theta}$ and $q_{\theta}$ using the VAE training loss, except we minimize the distance between a structure rendering $x_{\text{struct}}$ and the original image $x^*$. Additionally, we trained a CycleGAN model to map structure renderings to complete images, by giving the CycleGAN unaligned pairs of $x_{\text{struct}}$ and $x$ as training data. We compare our VED model to a VAE~\cite{kingma2013auto}, and our CycleGAN model to SpatialGAN~\cite{spatialgan}.  

\paragraph{Results.}

\begin{table}
\centering
\begin{tabular}{lr}
  \toprule
  \textbf{Model} & \multicolumn{1}{c}{\textbf{Score}} \\
  \midrule
  PS-GM (CycleGAN) & {\bf 85.51} \\
  BL (SpatialGAN) & 258.68 \\
  PS-GM (VED) & {\bf 59414.7} \\
  BL (VAE) & 60368.4 \\
  \midrule
  PS-GM (VED Stage 1 $p_{\phi}(s,c\mid z)$) & 32.0 \\
  PS-GM (VED Stage 2 $p_{\theta}(x\mid s,c)$) & 59382.6 \\
  \bottomrule
\end{tabular}
\caption{Performance of our approach PS-GM versus the baseline (BL) for generation from scratch. We report Fr\'echet inception distance for GAN-based models, and negative log-likelihood for the VAE-based models}
\label{tab:genexpresults}
\end{table}

\begin{figure}[ht]
\centering
\begin{tabular}{ccc}
  \includegraphics[width=0.12\textwidth]{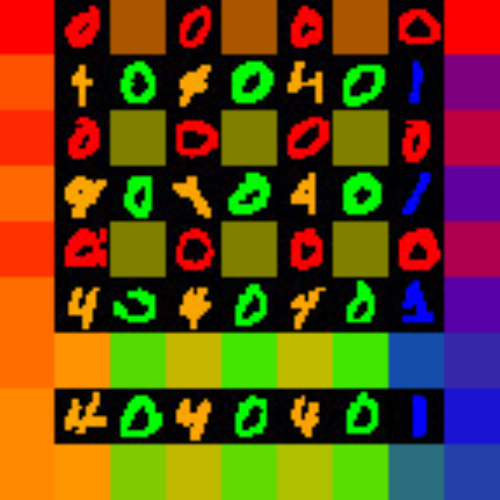} &
  \includegraphics[width=0.12\textwidth]{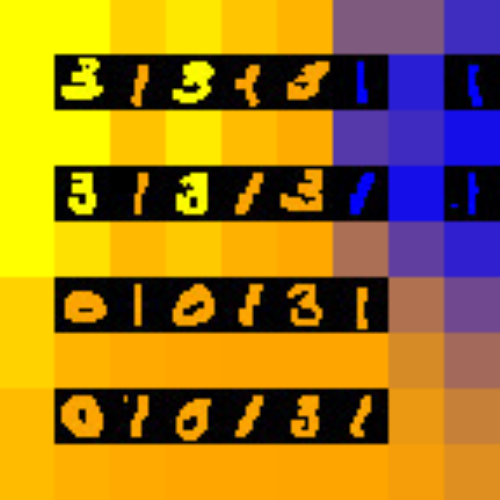} &
  \includegraphics[width=0.12\textwidth]{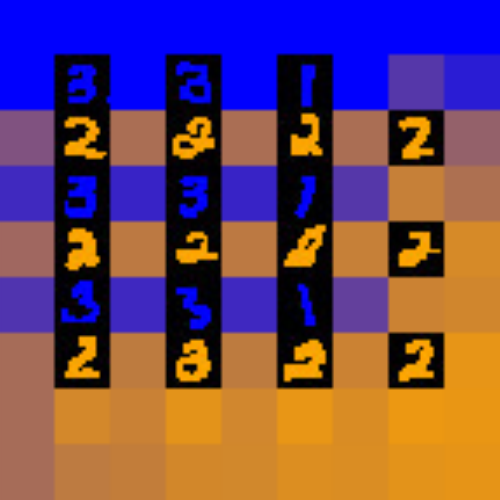} \\
  \multicolumn{3}{c}{Original Images} \vspace{5pt} \\
  \includegraphics[width=0.12\textwidth]{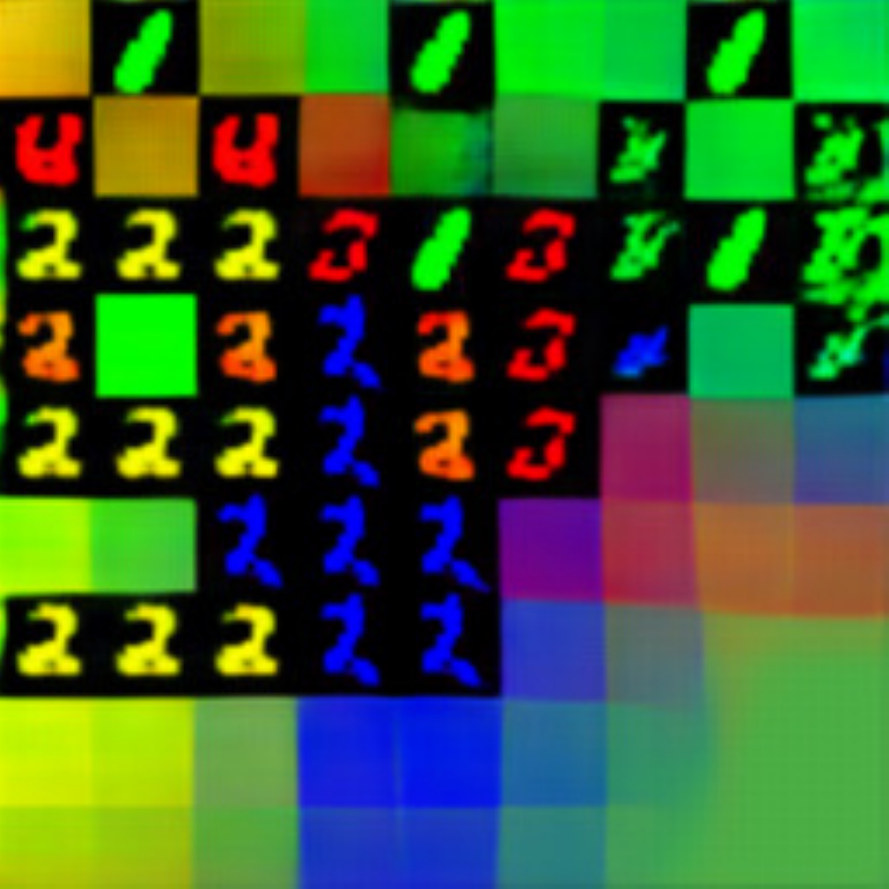} &
  \includegraphics[width=0.12\textwidth]{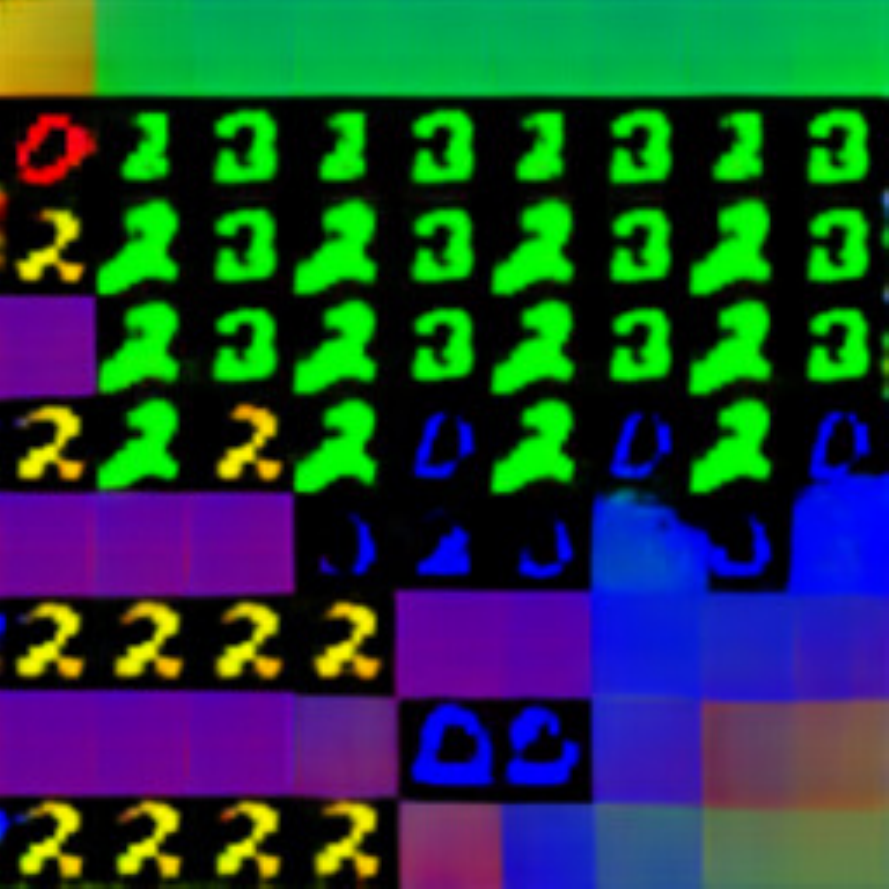} &
  \includegraphics[width=0.12\textwidth]{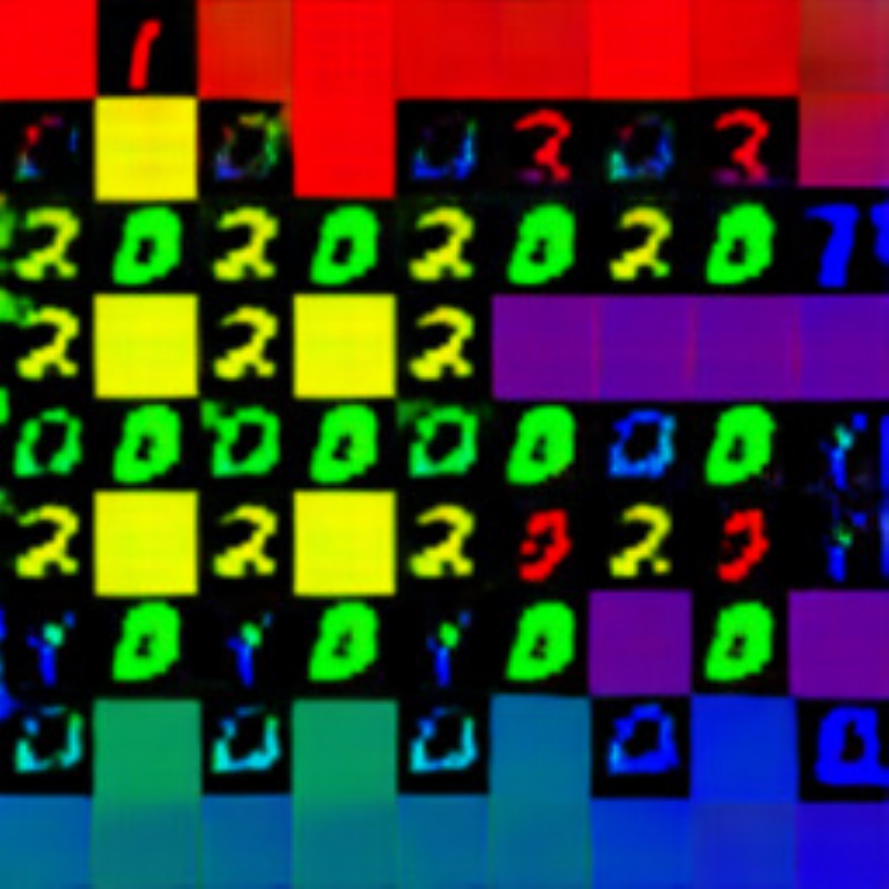} \\
  \multicolumn{3}{c}{PS-GM (CycleGAN)} \vspace{5pt} \\
  \includegraphics[width=0.12\textwidth]{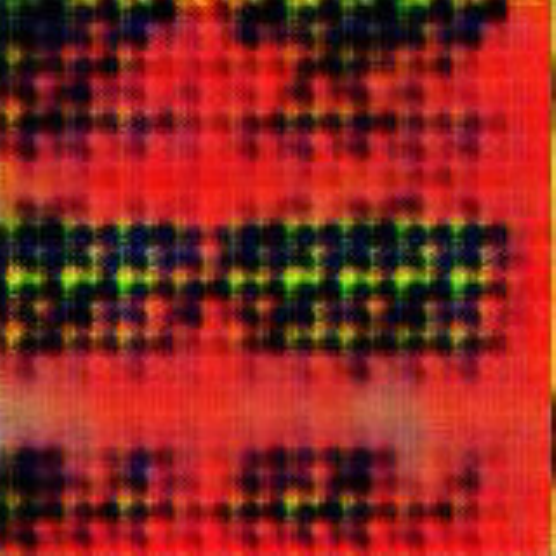} &
  \includegraphics[width=0.12\textwidth]{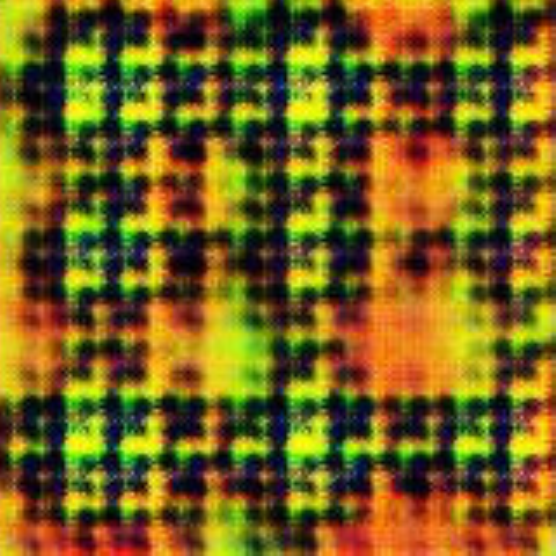} &
  \includegraphics[width=0.12\textwidth]{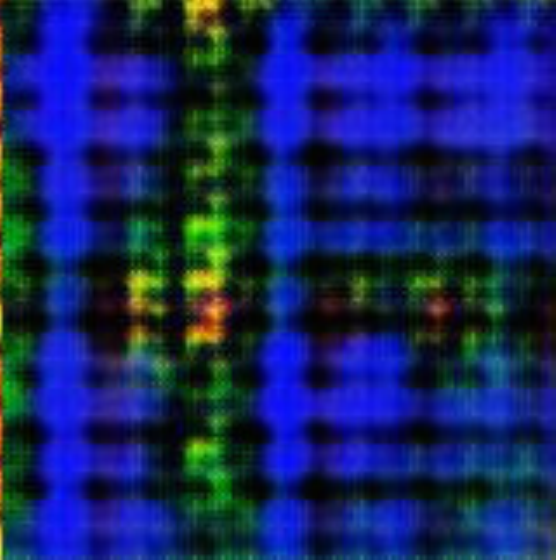} \\
  \multicolumn{3}{c}{Baseline (SpatialGAN)} \vspace{5pt} \\
  \includegraphics[width=0.12\textwidth]{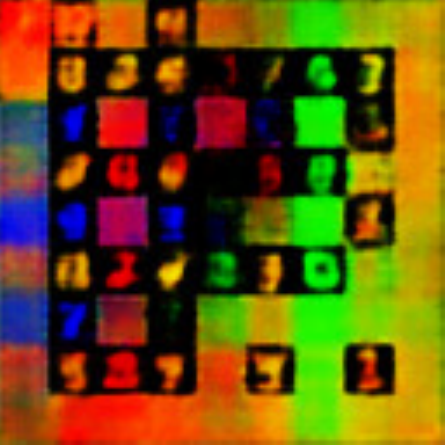} &
  \includegraphics[width=0.12\textwidth]{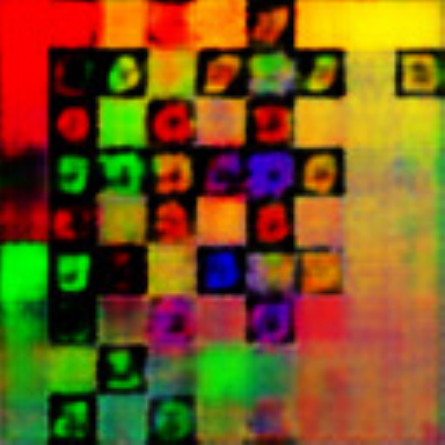} &
  \includegraphics[width=0.12\textwidth]{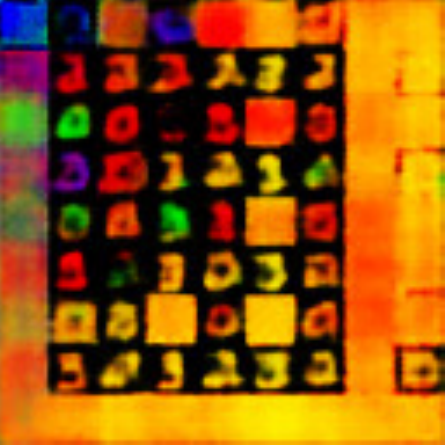} \\
  \multicolumn{3}{c}{PS-GM (VED)} \vspace{5pt} \\
  \includegraphics[width=0.12\textwidth]{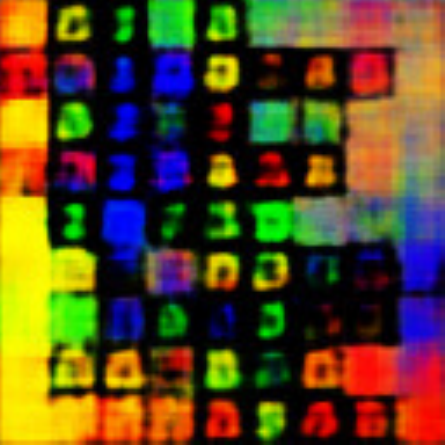} &
  \includegraphics[width=0.12\textwidth]{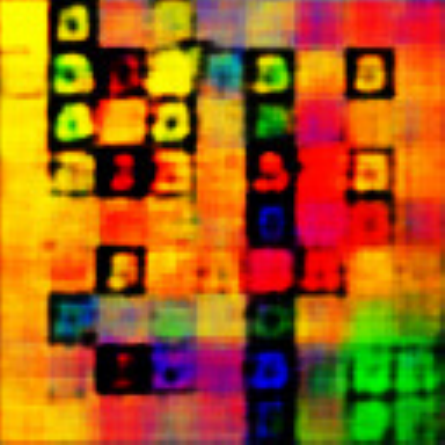} &
  \includegraphics[width=0.12\textwidth]{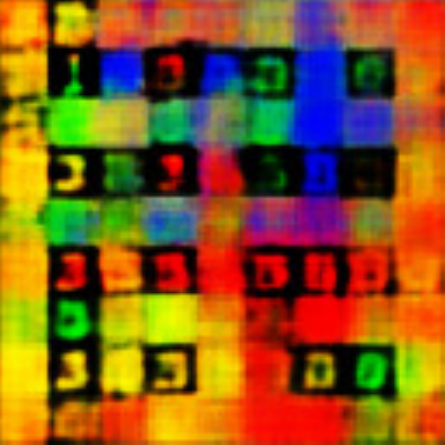} \\
  \multicolumn{3}{c}{Baseline (VAE)}
\end{tabular} 
\caption{Examples of synthetic images generated using our approach, PS-GM (with VED and CycleGan), and the baseline (a VAE and a SpatialGAN). Images in different rows are unrelated since the task is generation from scratch.}
\label{fig:genexpimages}
\end{figure}

\begin{figure*}
\centering
\begin{tabular}{ccccccc}
\includegraphics[width=0.12\textwidth]{images-synth-full1.pdf} &
\includegraphics[width=0.12\textwidth]{images-synth-full2.pdf} &
\includegraphics[width=0.12\textwidth]{images-synth-full3.pdf} & \hspace{0.1in} &
\includegraphics[width=0.12\textwidth]{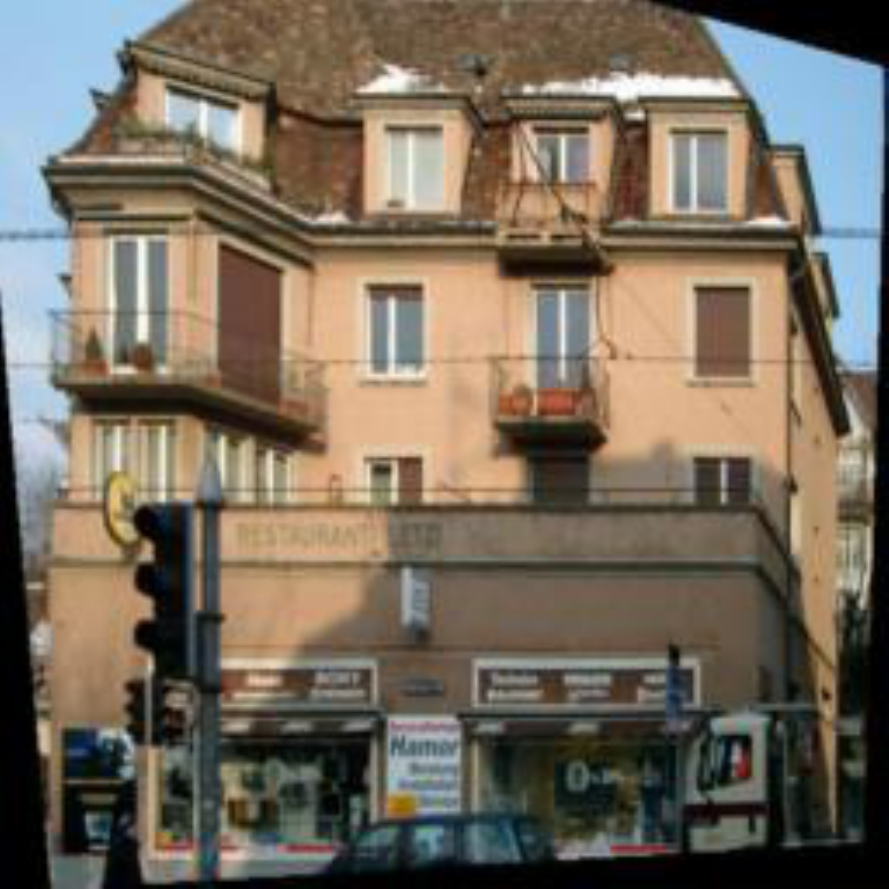} &
\includegraphics[width=0.12\textwidth]{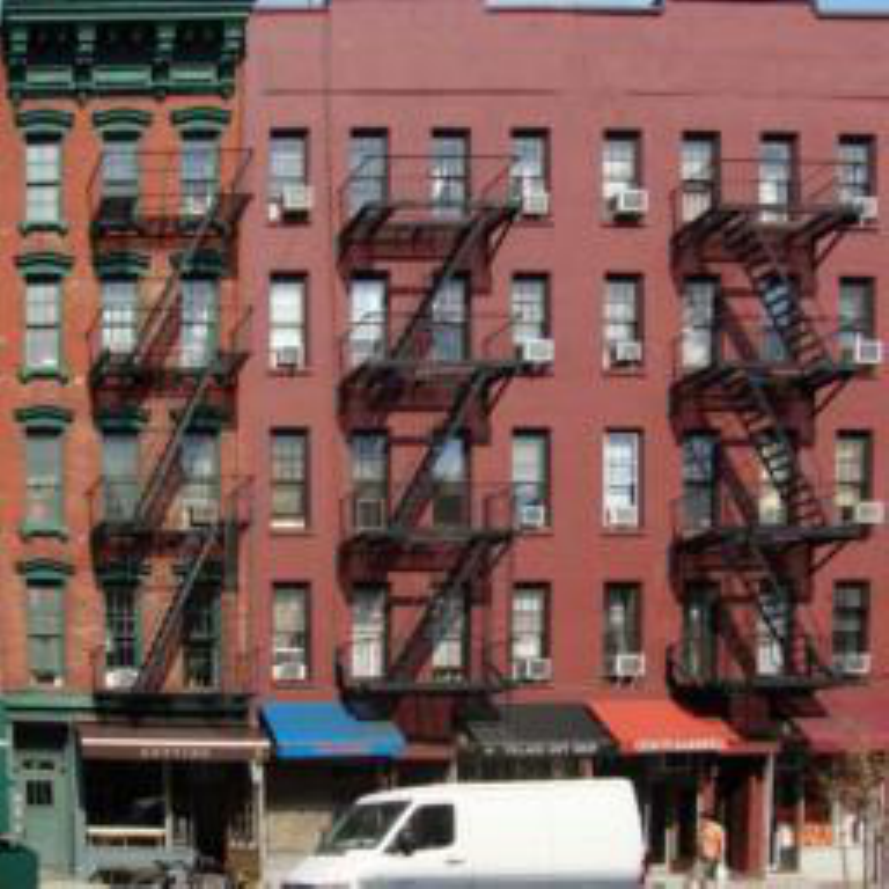} &
\includegraphics[width=0.12\textwidth]{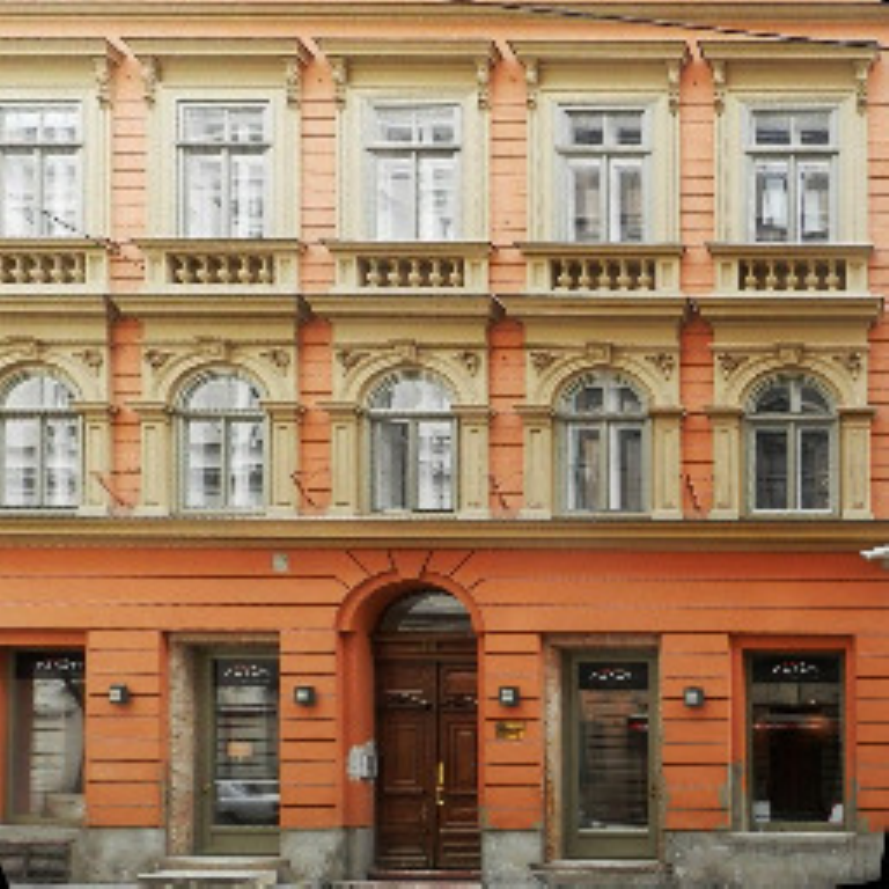} \\ 
\multicolumn{3}{c}{Original Image (Synthetic)} & \hspace{0.1in} &
\multicolumn{3}{c}{Original Image (Facades)} \vspace{5pt} \\
\includegraphics[width=0.12\textwidth]{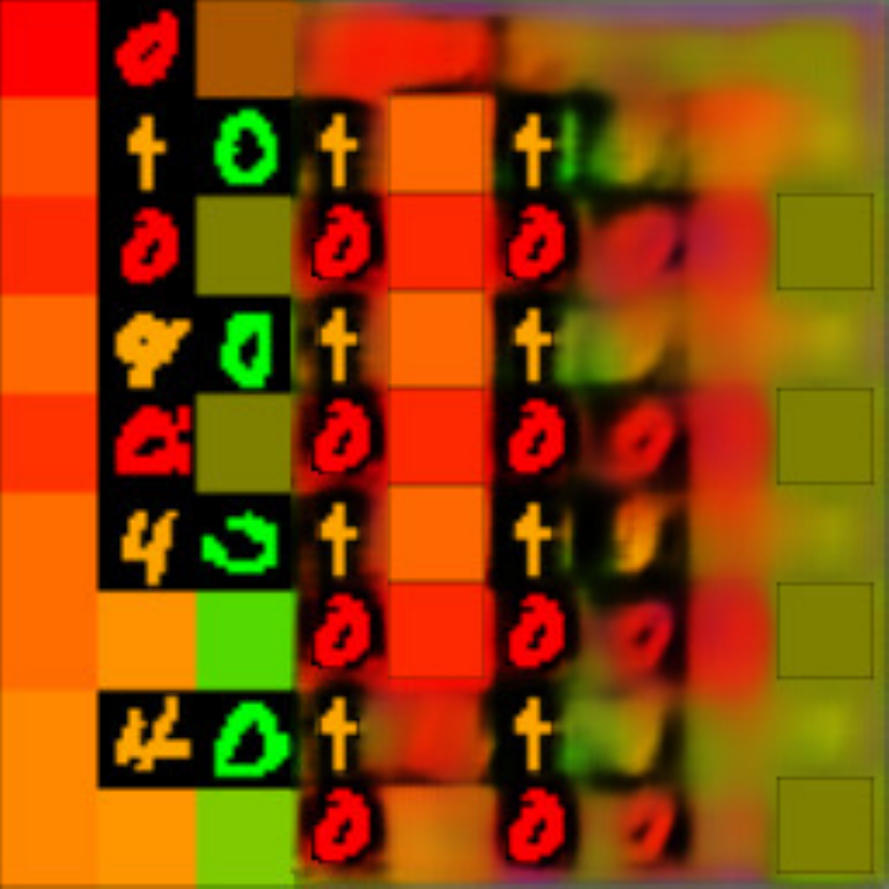} &
\includegraphics[width=0.12\textwidth]{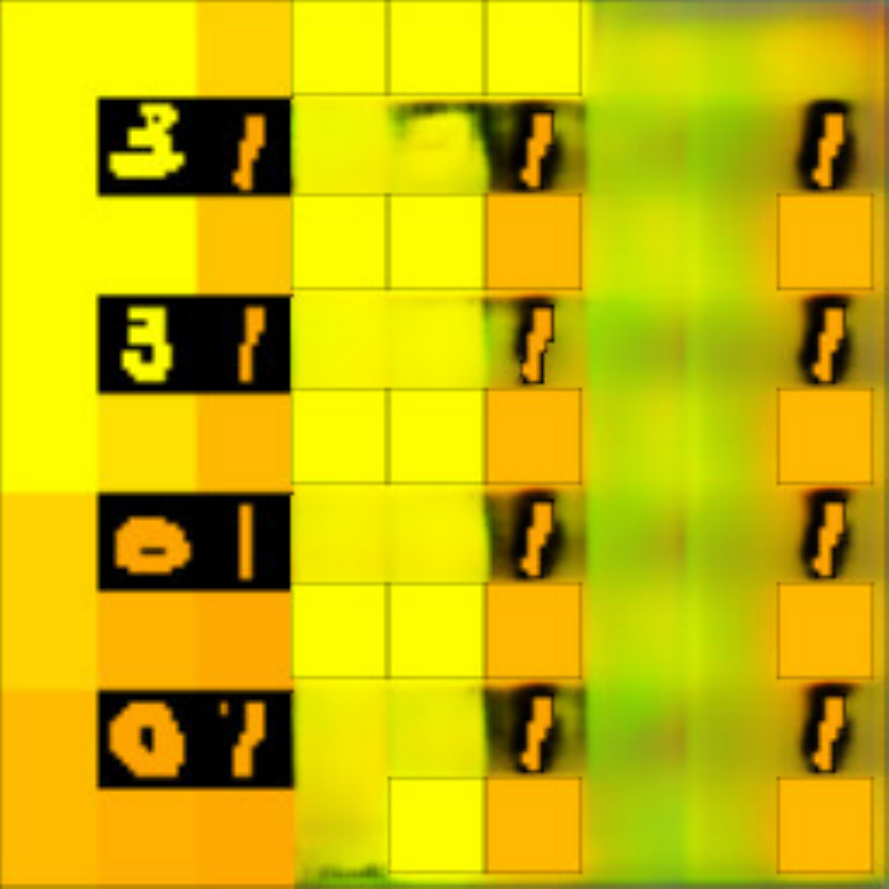} &
\includegraphics[width=0.12\textwidth]{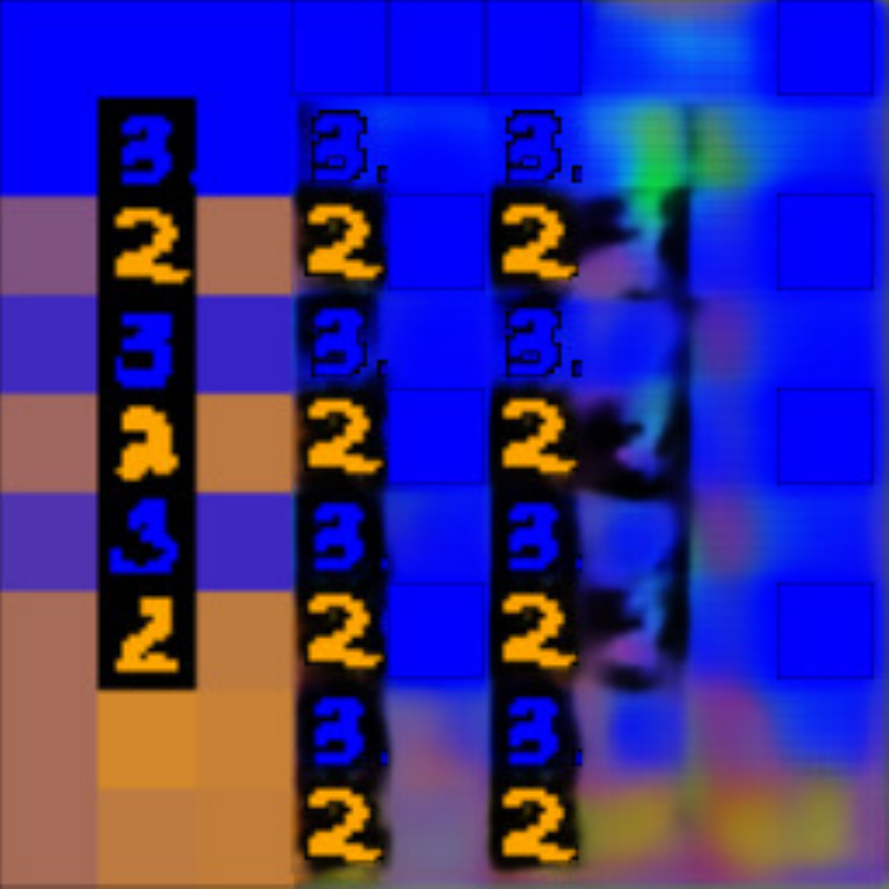} & \hspace{0.1in} &
\includegraphics[width=0.12\textwidth]{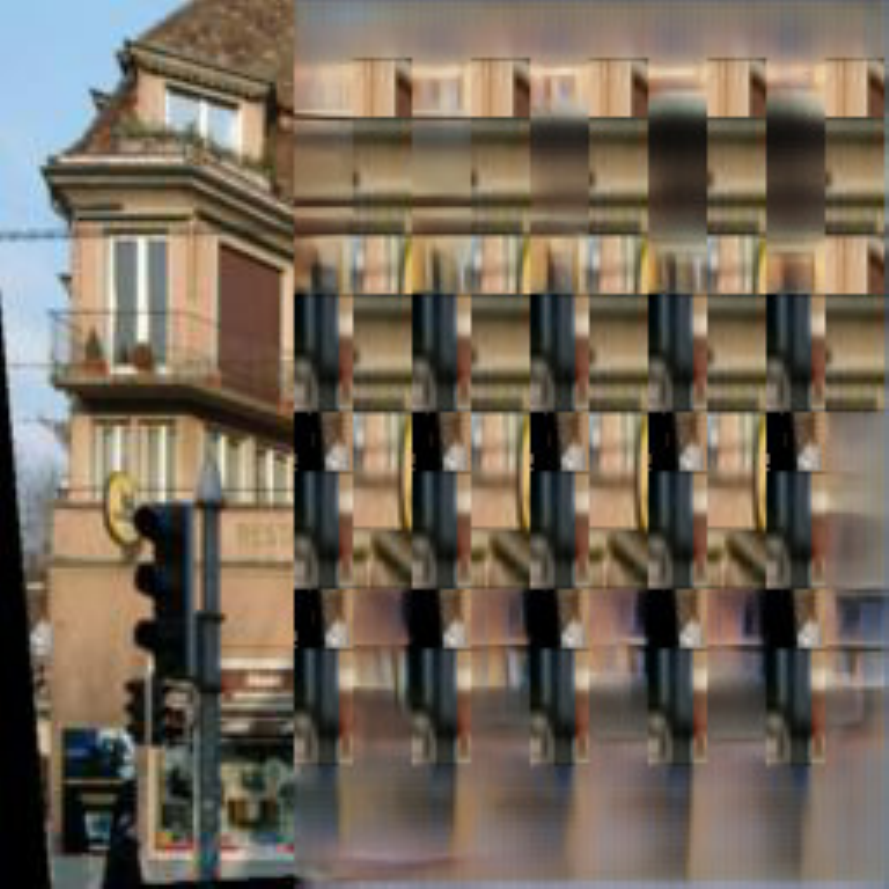} &
\includegraphics[width=0.12\textwidth]{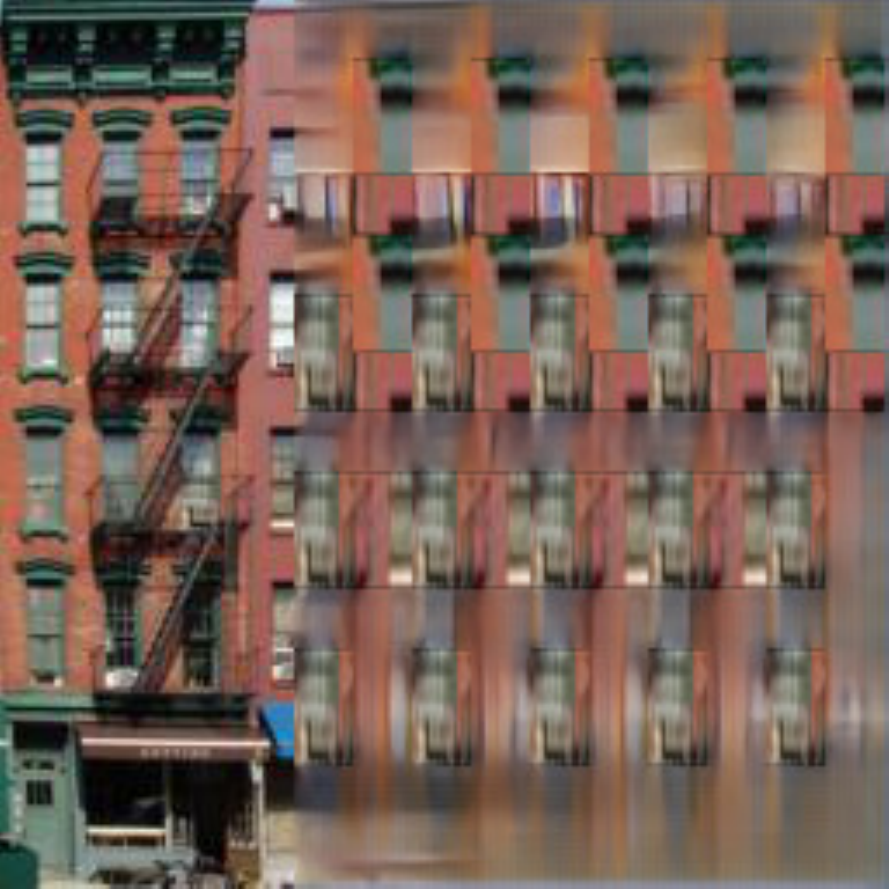} &
\includegraphics[width=0.12\textwidth]{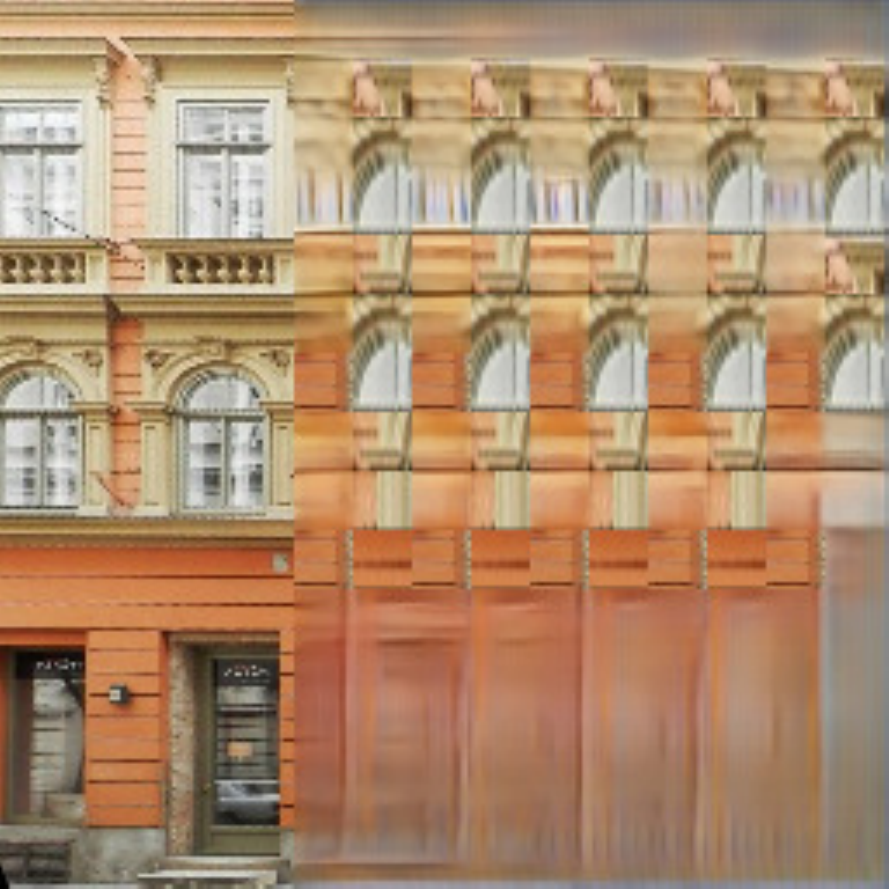} \\
\multicolumn{3}{c}{PS-GM (GLCIC, Synthetic)} & \hspace{0.1in} &
\multicolumn{3}{c}{PS-GM (GLCIC, Facades)} \vspace{5pt} \\
\includegraphics[width=0.12\textwidth]{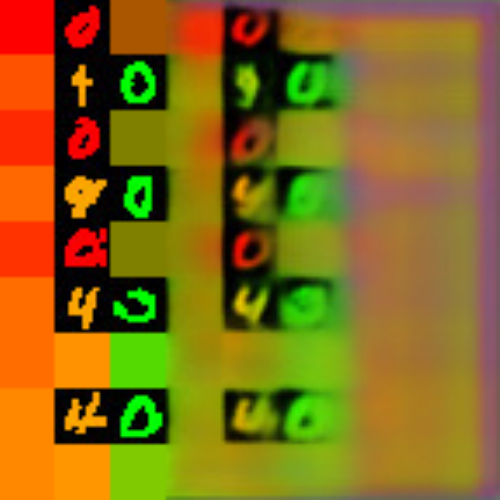} &
\includegraphics[width=0.12\textwidth]{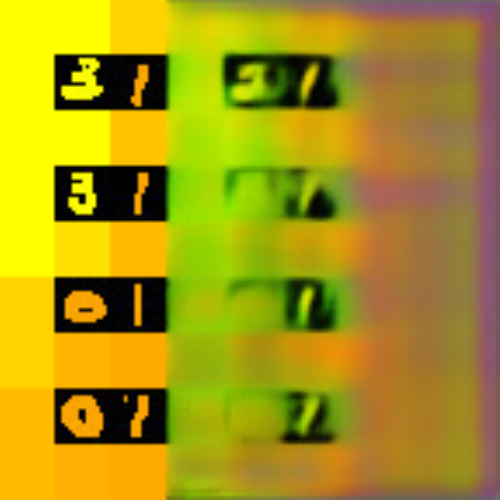} &
\includegraphics[width=0.12\textwidth]{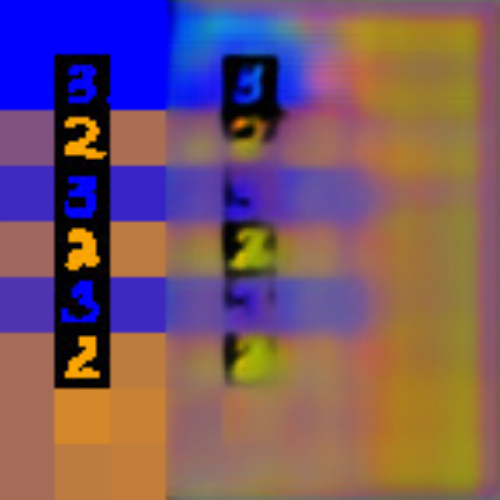} & \hspace{0.1in} &
\includegraphics[width=0.12\textwidth]{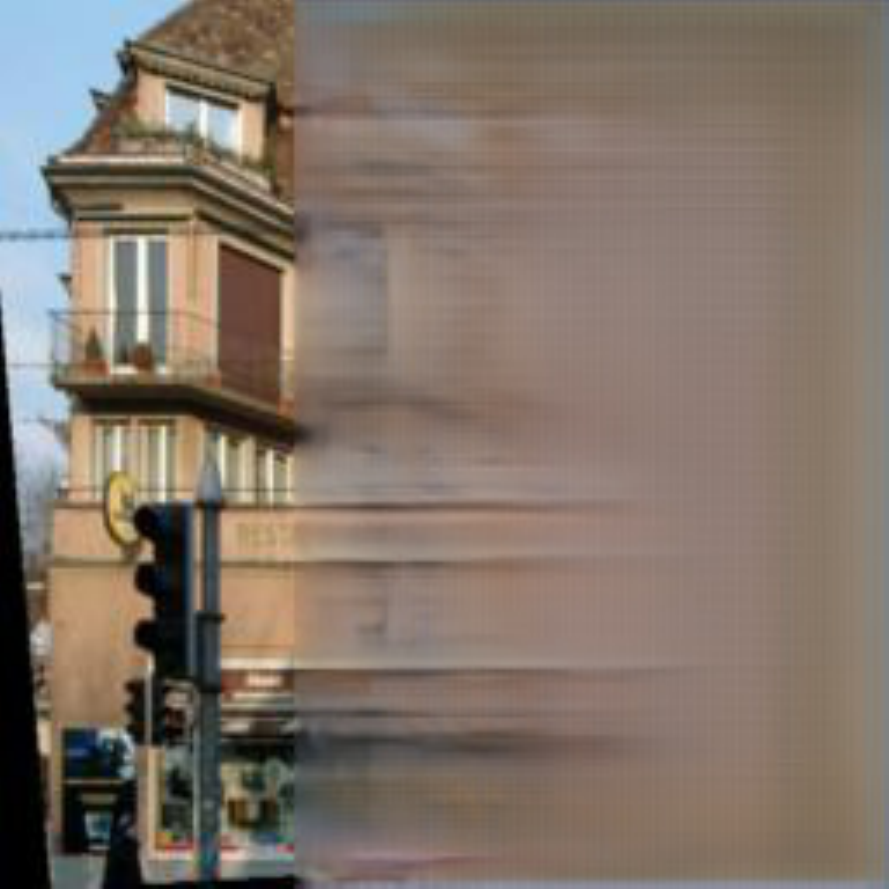} &
\includegraphics[width=0.12\textwidth]{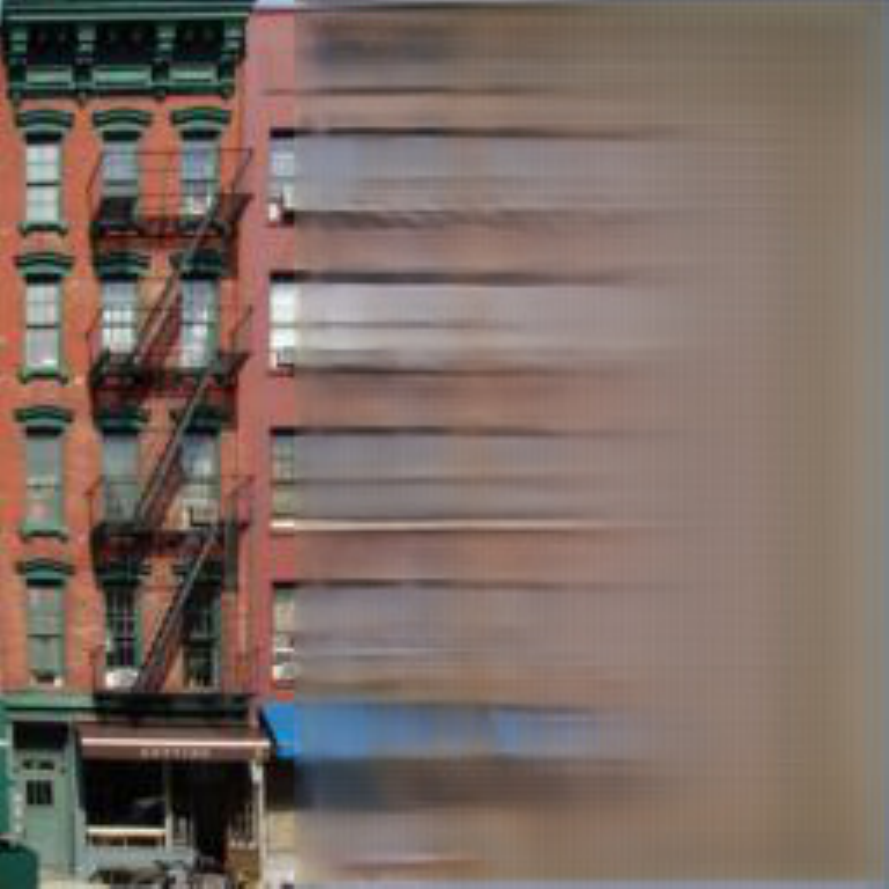} &
\includegraphics[width=0.12\textwidth]{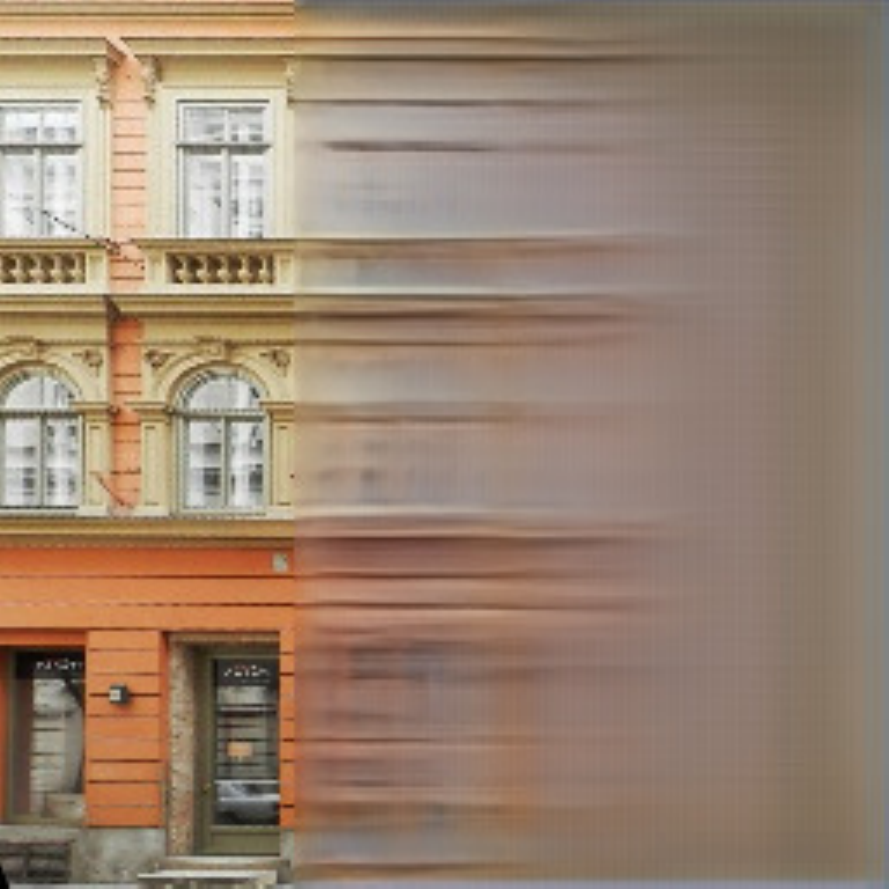} \\
\multicolumn{3}{c}{Baseline (GLCIC, Synthetic)} & \hspace{0.1in} &
\multicolumn{3}{c}{Baseline (GLCIC, Facades)}
\end{tabular}
\caption{Examples of images generated using our approach (PS-GM) and the baseline, using GLCIC for image completion.}
\label{fig:compexpimages}
\end{figure*}

We measure performance for PS-GM with the VED and the baseline VAE using the variational lower bound on the negative log-likelihood (NLL)~\cite{vaeloss} on a held-out test set. For our approach, we use the lower bound (\ref{eqn:variational2}),\footnote{Technically, $p_{\theta}(x\mid s_x,c_x)$ is lower bounded by the loss of the variational encoder-decoder).} which is the sum of the NLLs of the first and second stages; we report these NLLs separately as well. Figure~\ref{fig:genexpimages} shows examples of generated images. For PS-GM and SpatialGAN, we use Fr\'echet inception distance~\cite{heusel2017gans}. Table~\ref{tab:genexpresults} shows these metrics of both our approach and the baseline.

\paragraph{Discussion.}

The models based on our approach quantitatively improve over the respective baselines. The examples of images generated using our approach with VED completion appear to contain more structure than those generated using the baseline VAE. Similarly, the images generated using our approach with CycleGAN clearly capture more complex structure than the unbounded 2D repeating texture patterns captured by SpatialGAN.

\subsection{Image Completion}

\paragraph{Experimental setup.}

Second, we evaluated our approach PS-GM for image completion, on both our synthetic and the facades dataset. For this task, we compare using three image completion models: GLCIC~\cite{glcic}, CycleGAN~\cite{cyclegan}, and the VED architecture described in Section~\ref{sec:expgen}. GLCIC is a state-of-the-art image completion model. CycleGAN is a generic image-to-image transformer.  It uses unpaired training data, but we found that for our task, it outperforms approaches such as Pix2Pix~\cite{isola2017image} that take paired training data. For each model, we trained two versions:
{\setlength{\parskip}{0pt}
\begin{itemize}
\setlength{\itemsep}{0pt}
\item {\bf Our approach (PS-GM):} As described in Section~\ref{sec:model} (for image completion), given a partial image $x_{\text{part}}$, we use Algorithm~\ref{alg:synth} to synthesize a program $P_{\text{part}}$. We extrapolate $P_{\text{part}}$ to $\hat{P}=f(P_{\text{part}})$, and execute $\hat{P}$ to obtain a structure rendering $x_{\text{struct}}$. Finally, we train the image completion model (GLCIC, CycleGAN, or VED) to complete $x_{\text{struct}}$ to the original image $x^*$.
\item {\bf Baseline:} Given a partial image $x_{\text{part}}$, we train the image completion model (GLCIC, CycleGAN, or VED) to directly complete $x_{\text{part}}$ to the original image $x*$.
\end{itemize}}

\begin{table}
\centering
\begin{tabular}{lrrrr}
  \toprule
  \multirow{2}{*}{\textbf{Model}} & \multicolumn{2}{c}{\textbf{Synthetic}} & \multicolumn{2}{c}{\textbf{Facades}} \\
  & \multicolumn{1}{c}{PS-GM} & \multicolumn{1}{c}{BL} & \multicolumn{1}{c}{PS-GM} & \multicolumn{1}{c}{BL} \\
  \midrule
  GLCIC & {\bf 106.8} & 163.66 & {\bf 141.8} & 195.9 \\
  CycleGAN & {\bf 91.8} & 218.7 & {\bf 124.4} & 251.4 \\
  VED & {\bf 44570.4} & 52442.9 & 8755.4 & {\bf 8636.3}\\
  \bottomrule
\end{tabular}
\caption{Performance of our approach PS-GM versus the baseline (BL) for image completion. We report Fr\'echet distance for GAN-based models, and negative log-likelihood (NLL) for the VED.}
\label{tab:compexpresults}
\end{table}

\paragraph{Results.}

As in Section~\ref{sec:expgen}, we measure performance using Fr\'echet inception distance for GLCIC and CycleGAN, and negative log-likelihood (NLL) to evaluate the VED, reported on a held-out test set. We show these results in Table~\ref{tab:compexpresults}. We show examples of completed image using GLCIC in Figure~\ref{fig:compexpimages}. We show additional examples of completed images including those completed using CycleGAN and VED in Appendix~\ref{sec:expappendix}.

\paragraph{Discussion.}

Our approach PS-GM outperforms the baseline in every case except the VED on the facades dataset. We believe the last result is since both VEDs failed to learn any meaningful structure (see Figure~\ref{fig:expsyntheticappendix} in Appendix~\ref{sec:expappendix}).

A key reason why the baselines perform so poorly on the facades dataset is that the dataset is very small. Nevertheless, even on the synthetic dataset (which is fairly large), PS-GM substantially outperforms the baselines. Finally, generative models such as GLCIC are known to perform poorly far away from the edges of the provided partial image~\cite{glcic}. A benefit of our approach is that it provides the global context for a deep-learning based image completion model such as GLCIC to perform local completion.

\section{Conclusion}

We have proposed a new approach to generation that incorporates programmatic structure into state-of-the-art deep learning models. In our experiments, we have demonstrated the promise of our approach to improve generation of high-dimensional data with global structure that current state-of-the-art deep generative models have difficulty capturing.


\bibliographystyle{icml2018}
\bibliography{progstruc.bib}

\clearpage

\onecolumn

\appendix
\section{Experimental Details}
\label{sec:expdetailsappendix}

\subsection{Synthetic Dataset}

To sample a random image, we started with a $9\times 9$ grid, where each grid cell is $16\times 16$ pixels. We randomly sample a program $P=((s_1,c_1),...,(s_k,c_k))$ (for $k=12$), where each perceptual component $c$ is a randomly selected MNIST image (downscaled to our grid cell size and colorized). To create correlations between different parts of $P$, we sample $(s_h,c_h)$ depending on $(s_1,c_1),...,(s_{h-1},c_{h-1})$. First, to sample each component $c_h$, we first sample latent properties of $c_h$ (i.e., its MNIST label $\{0,1,...,4\}$ and its color $\{\text{red},\text{blue},\text{orange},\text{green},\text{yellow}\}$). Second, we sample the parameters of $s_h$ conditional on these properties. To each of the 25 possible latent properties of $c_h$, we associate a discrete distribution over latent properties for later elements in the sequence, as well as a mean and standard deviation for each of the parameters of the corresponding sketch $s_h$.

We then render $P$ by executing each $(s_h,c_h)$ in sequence. However, when executing $(s_h,c_h)$, on each iteration $(i,j)$ of the for-loop, instead of rendering the sub-image $c_h$ at each position in the grid, we randomly sample another MNIST image $c_h^{(i,j)}$ with the same label as $c_h$, recolor $c_h^{(i,j)}$ to be the same color as $c_h$, and render $c_h^{(i,j)}$. By doing so, we introduce noise into the programmatic structure.

\subsection{Generation from Scratch}

\paragraph{PS-GM architecture.}

For the first stage of PS-GM (i.e., generating the program $P=(s,c)$), we use a 3-layer LSTM encoder $p_{\phi}(s,c\mid z)$ and a feedforward decoder $q_{\tilde{\phi}}(z\mid s,c)$.  The LSTM includes sequences of 13-dimensional vectors, of which 6 dimensions represent the structure of the for-loop being generated, and 7 dimensions are an encoding of the image to be rendered.  The image compression was performed via a convolutional architecture with 2 convolutional layers for encoding and 3 deconvolutional layers for decoding.

For the second stage of PS-GM (i.e., completing the structure rendering $x_{\text{struct}}$ into an image $x$), we use a VED; the encoder $q_{\theta}(w\mid x_{\text{struct}})$ is a CNN with 4 layers, and the decoder $p_{\theta}(x\mid w)$ is a transpose CNN with 6 layers.  The CycleGAN model has a discriminator with 3 convolutional layers and a generator which uses transfer learning by employing the pre-trained ResNet architecture.

\paragraph{Baseline architecture.}

The architecture of the baseline is a vanilla VAE with the same as the architecture as the VED we used for the second state of PS-GM, except the input to the encoder is the original training image $x$ instead of the structure rendering $x_{\text{struct}}$.  The baselines with CycleGAN also use the same architecture as PS-GM with CycleGAN/GLCIC.  The Spatial GAN was trained with 5 layers each in the generative/discriminative layer, and 60-dimensional global and 3-dimensional periodic latent vectors.

\subsection{Image completion.}

\paragraph{PS-GM architecture.}

For the first stage of PS-GM for completion (extrapolation of the program from a partial image to a full image), we use a feedforward network with three layers.  For the second stage of completion via VAE, we use a convolutional/deconvolutional architecture.  The encoder is a CNN with 4 layers, and the decoder is a transpose CNN with 6 layers.  As was the case in generation, the CycleGAN model has a discriminator with 3 convolutional layers and a generator which uses transfer learning by employing the pre-trained ResNet architecture.

\paragraph{Baseline architecture.}

For the baseline VAE architecture, we used a similar architecture to the PS-GM completion step (4 convolutional and 6 deconvolutional layers).  The only difference was the input, which was a partial image rather than an image rendered with structure.  The CycleGAN architecture was similar to that used in PS-GM (although it mapped partial images to full images rather than partial images with structure to full images).

\section{Additional Results}
\label{sec:expappendix}

In Figure~\ref{fig:expfacadesappendix}, we show examples of how our image completion pipeline is applied to the facades dataset, and in Figure~\ref{fig:expsyntheticappendix}, we show examples of how our image completion pipeline is applied to our synthetic dataset.

\begin{figure*}[ht]
  \centering
  \begin{tabular}{rccc}
    Original Image & \includegraphics[width=0.1\textwidth]{images-facades-full1.pdf} & \includegraphics[width=0.1\textwidth]{images-facades-full2.pdf} & \includegraphics[width=0.1\textwidth]{images-facades-full3.pdf} \\ 
    Partial Image & \includegraphics[width=0.1\textwidth]{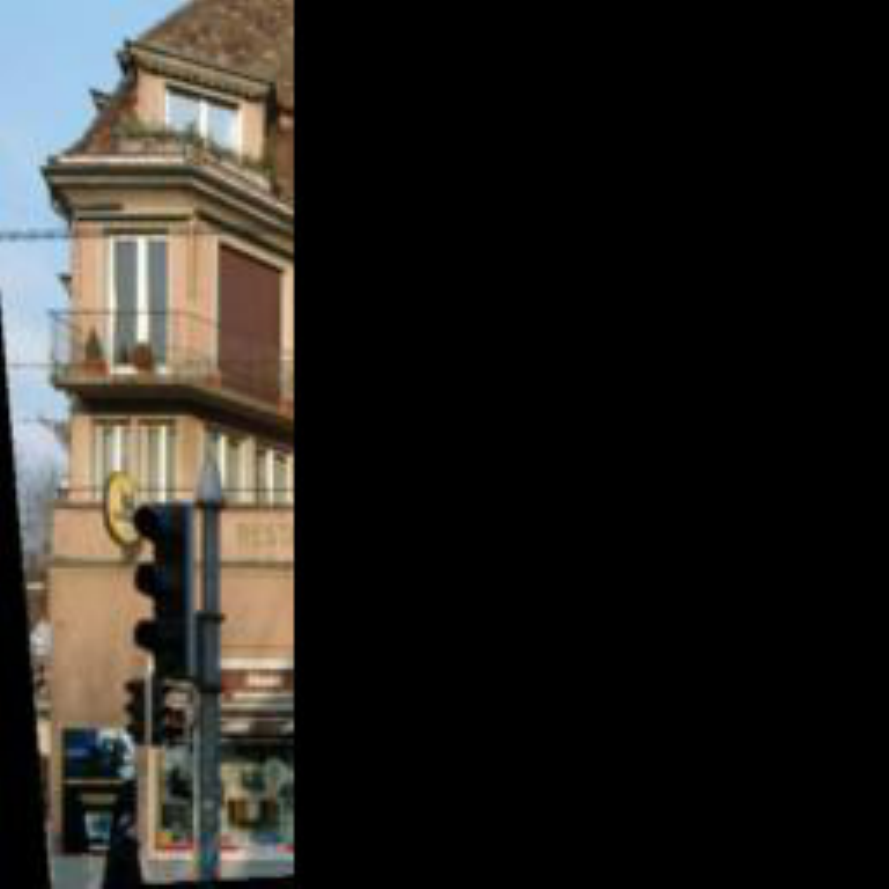}  & \includegraphics[width=0.1\textwidth]{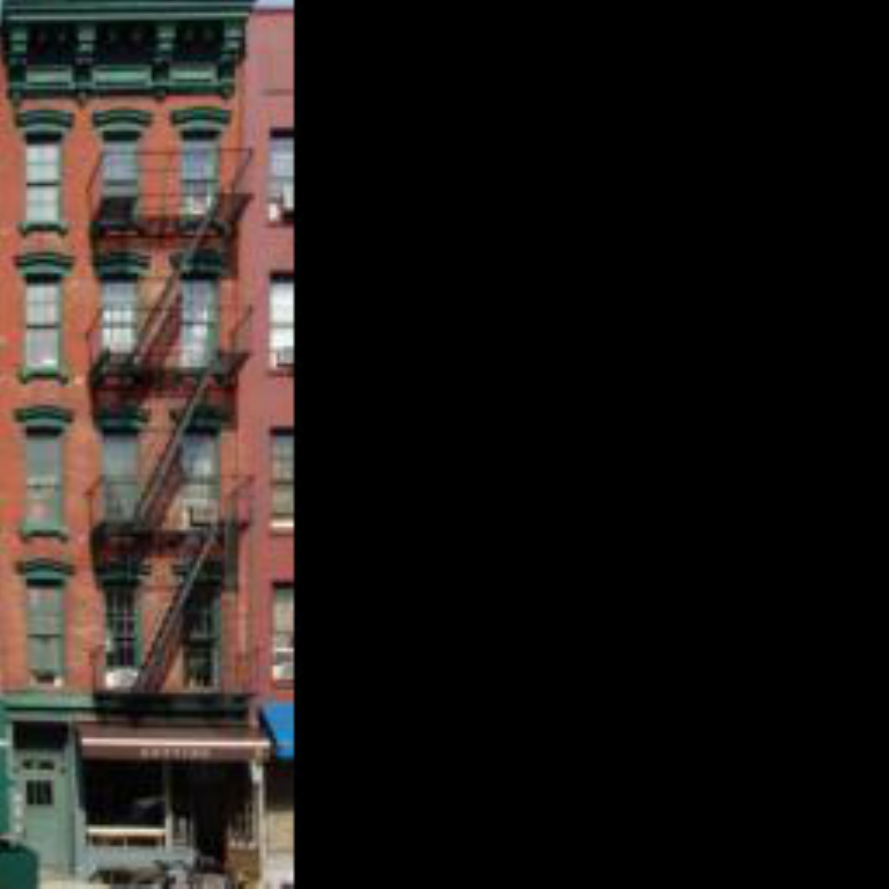} & \includegraphics[width=0.1\textwidth]{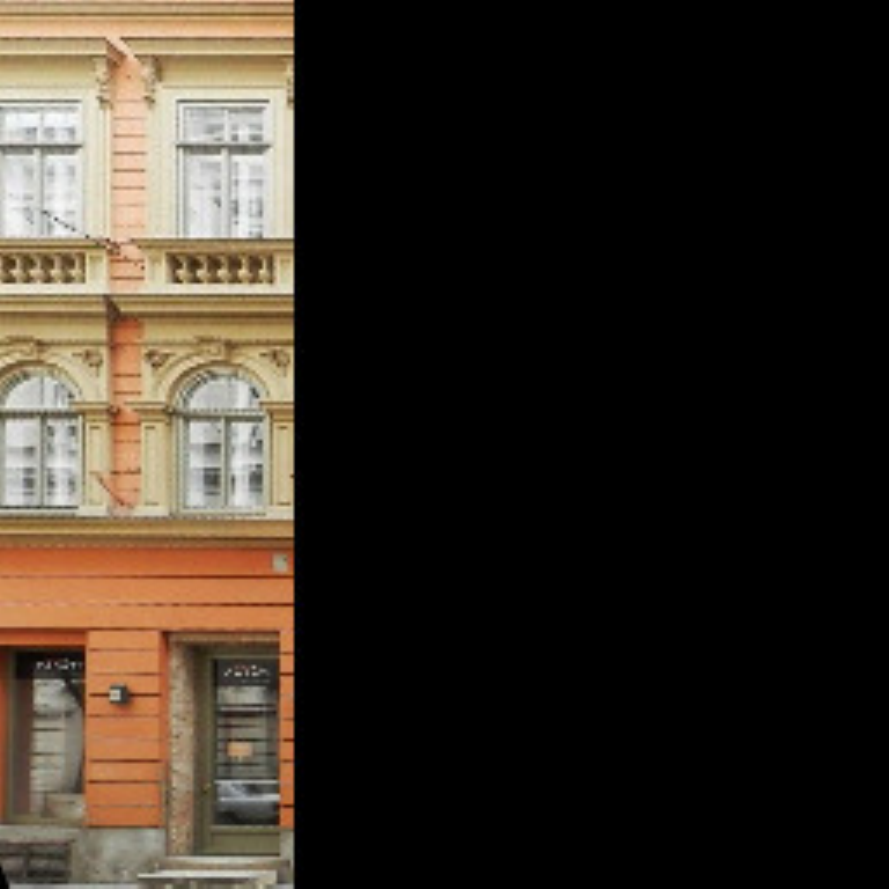} \\
    Structure Rendering & \includegraphics[width=0.1\textwidth]{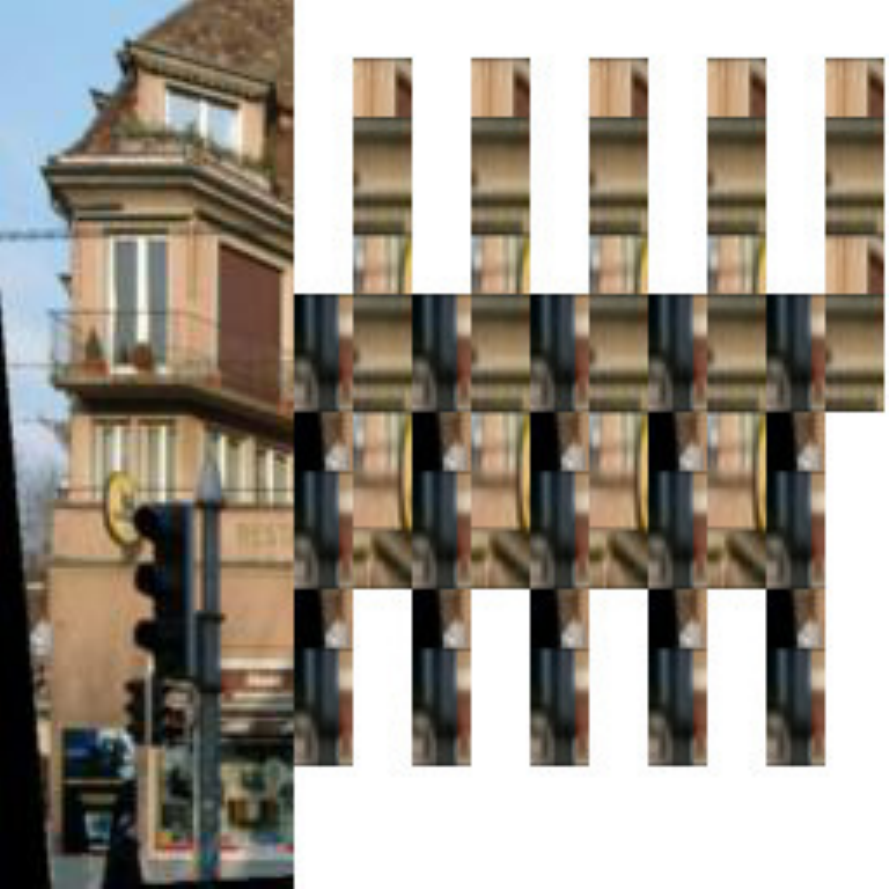}  & \includegraphics[width=0.1\textwidth]{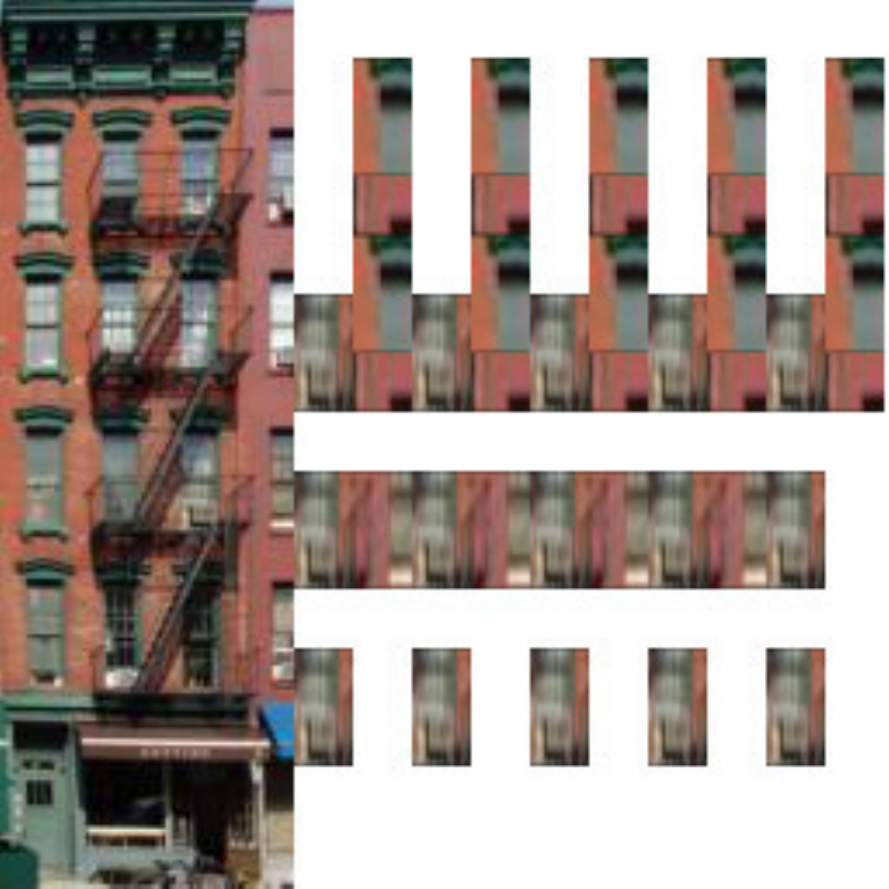} & \includegraphics[width=0.1\textwidth]{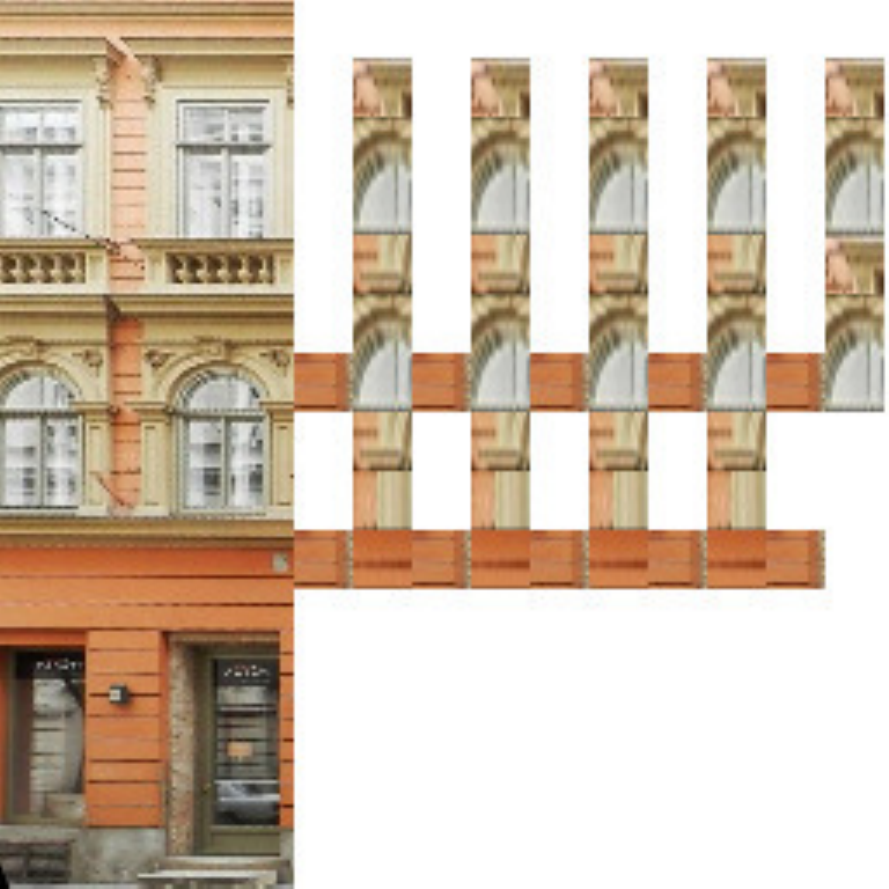} \\
    PS-GM (GLCIC) & \includegraphics[width=0.1\textwidth]{images-facades-glcicprog1.pdf}  & \includegraphics[width=0.1\textwidth]{images-facades-glcicprog2.pdf} & \includegraphics[width=0.1\textwidth]{images-facades-glcicprog3.pdf} \\
    Baseline (GLCIC) & \includegraphics[width=0.1\textwidth]{images-facades-glcic1.pdf}  & \includegraphics[width=0.1\textwidth]{images-facades-glcic2.pdf} & \includegraphics[width=0.1\textwidth]{images-facades-glcic3.pdf} \\
    PS-GM (CycleGAN) & \includegraphics[width=0.1\textwidth]{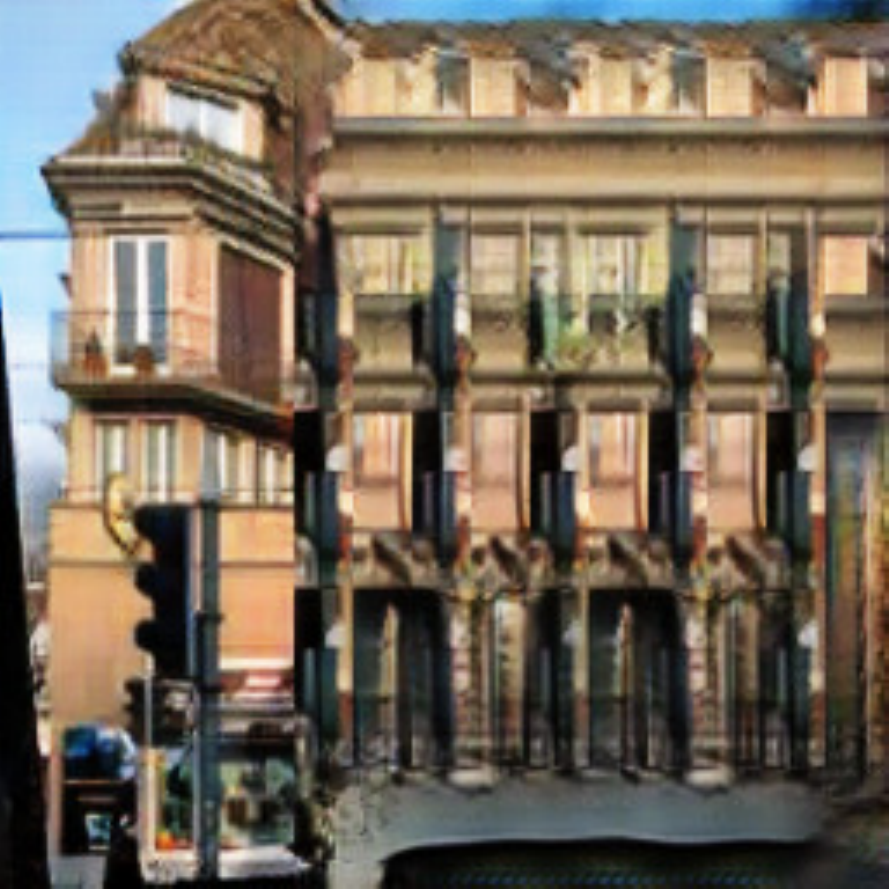}  & \includegraphics[width=0.1\textwidth]{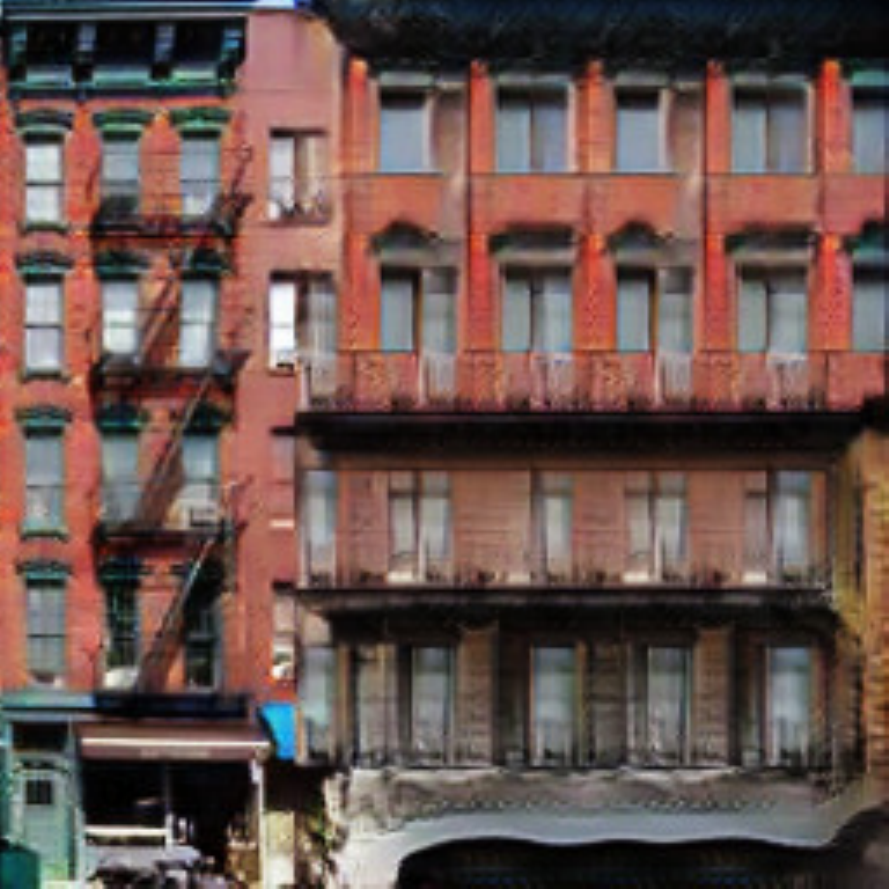} & \includegraphics[width=0.1\textwidth]{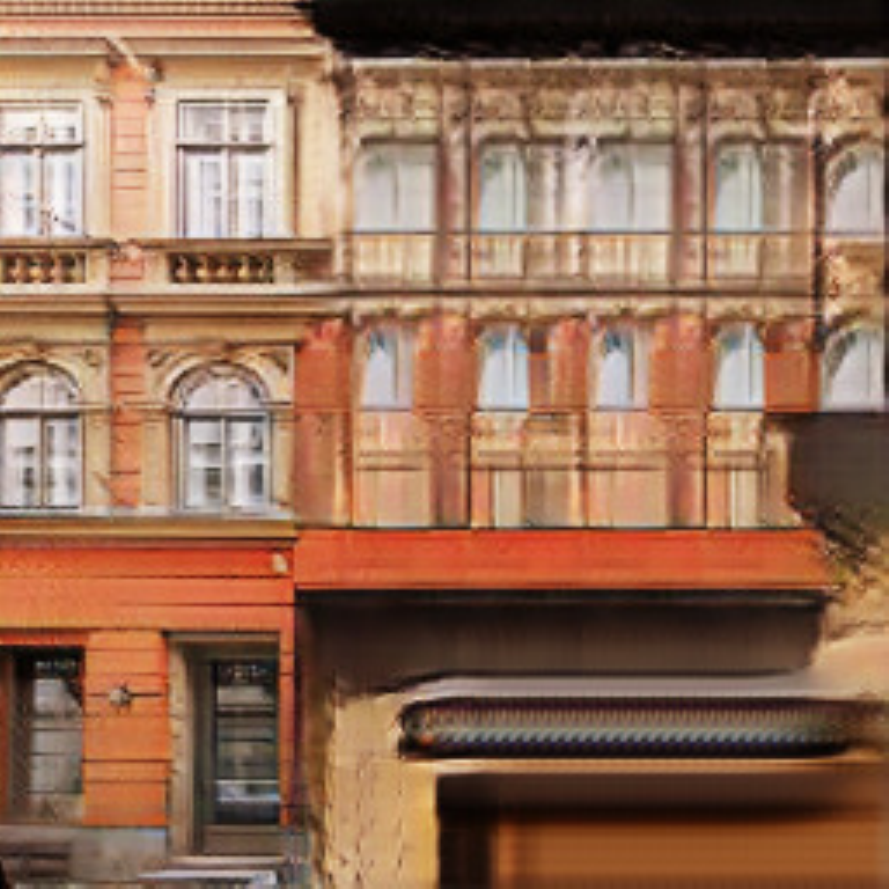} \\
    Baseline (CycleGAN) & \includegraphics[width=0.1\textwidth]{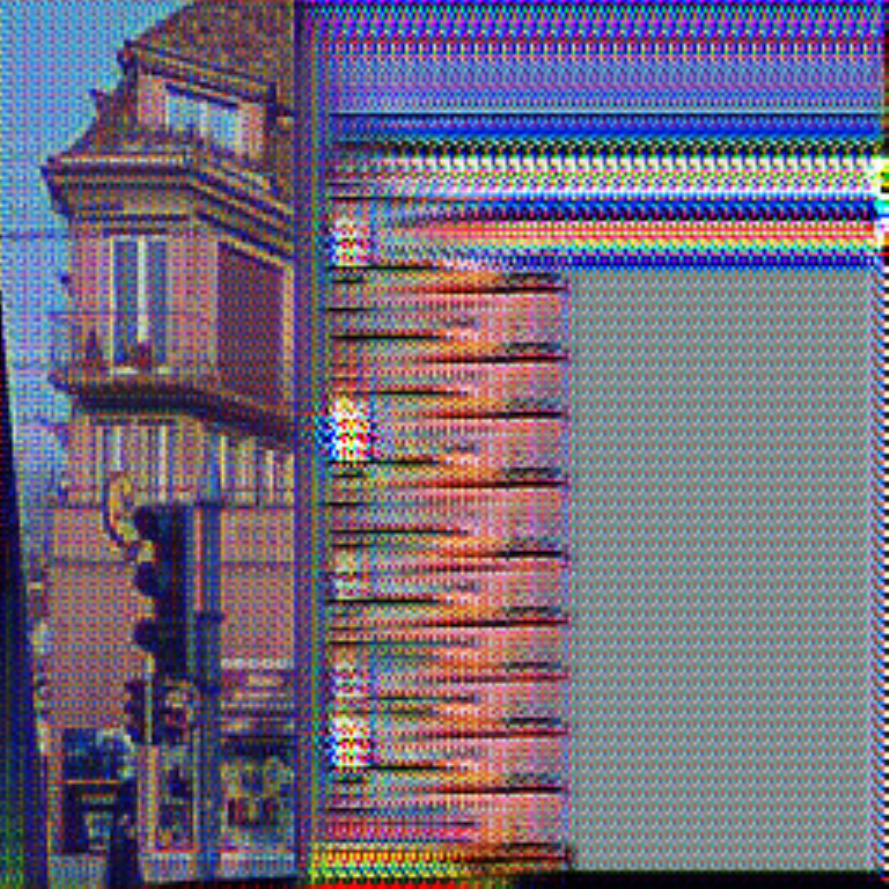}  & \includegraphics[width=0.1\textwidth]{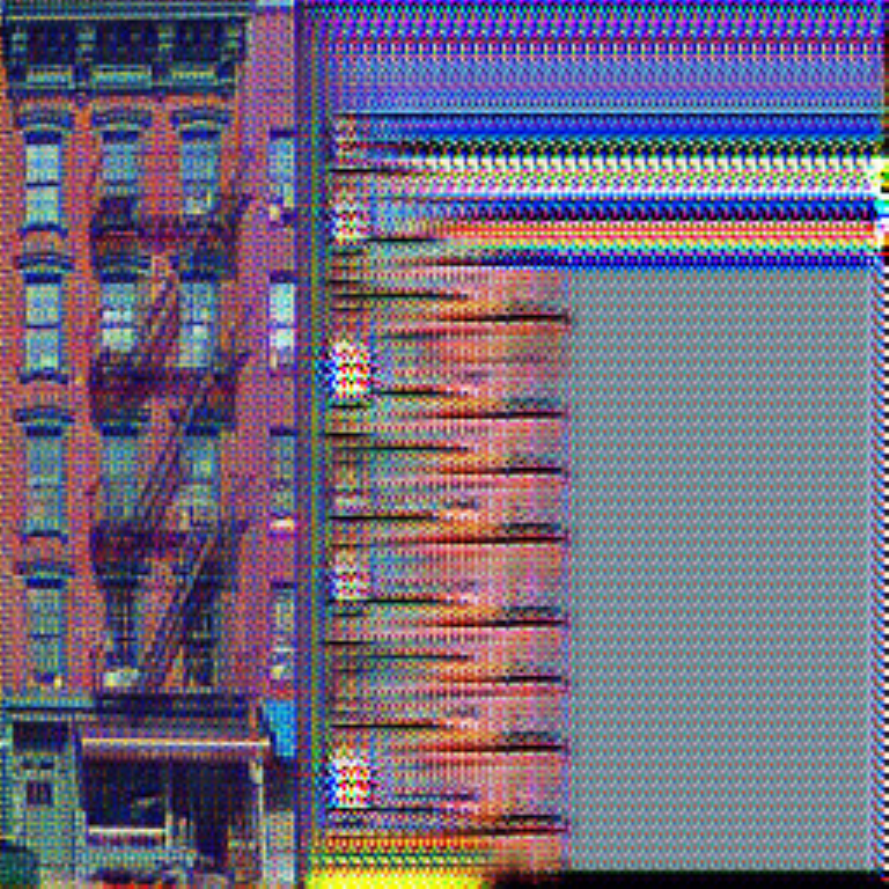} & \includegraphics[width=0.1\textwidth]{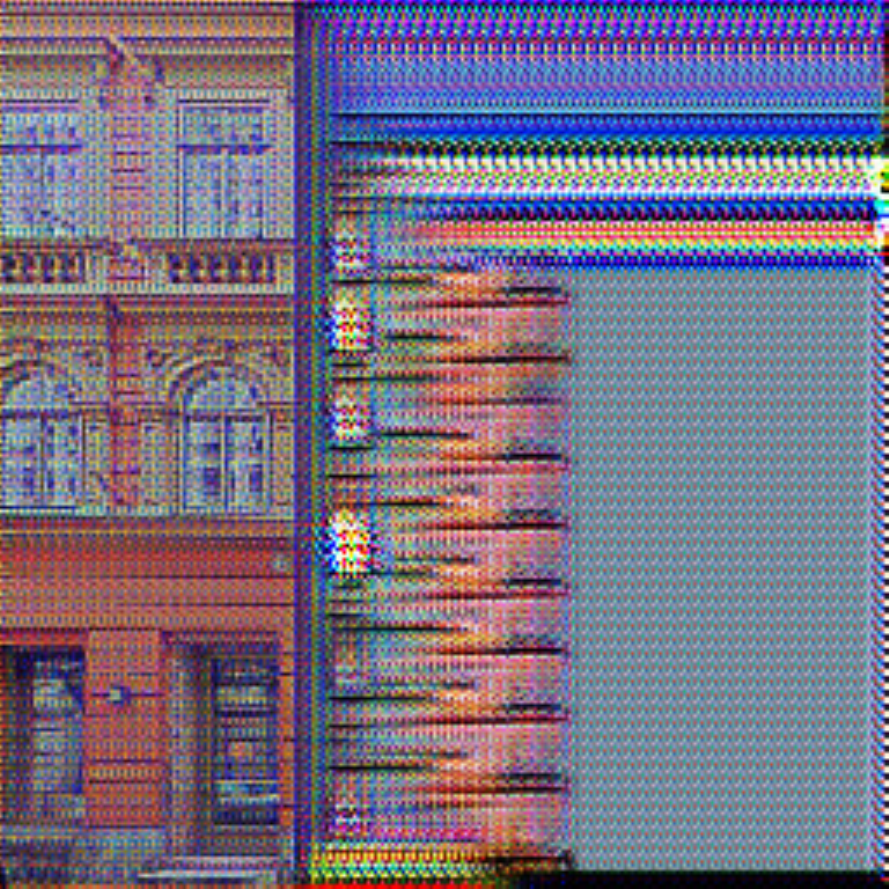} \\
    PS-GM (VED) & \includegraphics[width=0.1\textwidth]{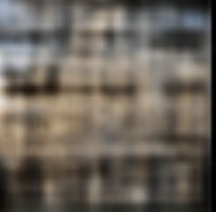}  & \includegraphics[width=0.1\textwidth]{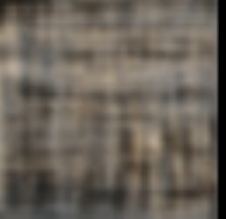} & \includegraphics[width=0.1\textwidth]{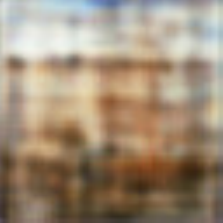} \\
    Baseline (VED) & \includegraphics[width=0.1\textwidth]{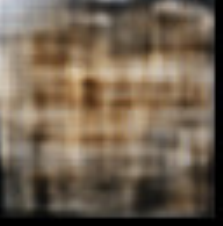}  & \includegraphics[width=0.1\textwidth]{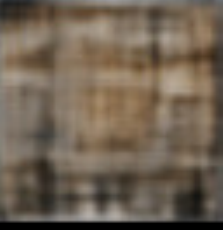} & \includegraphics[width=0.1\textwidth]{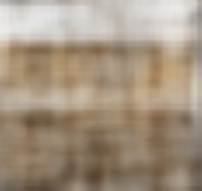} \\
\end{tabular}
  \caption{Examples of our image completion pipeline on the facades dataset.}
  \label{fig:expfacadesappendix}
\end{figure*}

\begin{figure*}[ht]
  \centering
  \begin{tabular}{rccc}
    Original Image & \includegraphics[width=0.1\textwidth]{images-synth-full1.pdf} & \includegraphics[width=0.1\textwidth]{images-synth-full2.pdf} & \includegraphics[width=0.1\textwidth]{images-synth-full3.pdf} \\ 
    Partial Image & \includegraphics[width=0.1\textwidth]{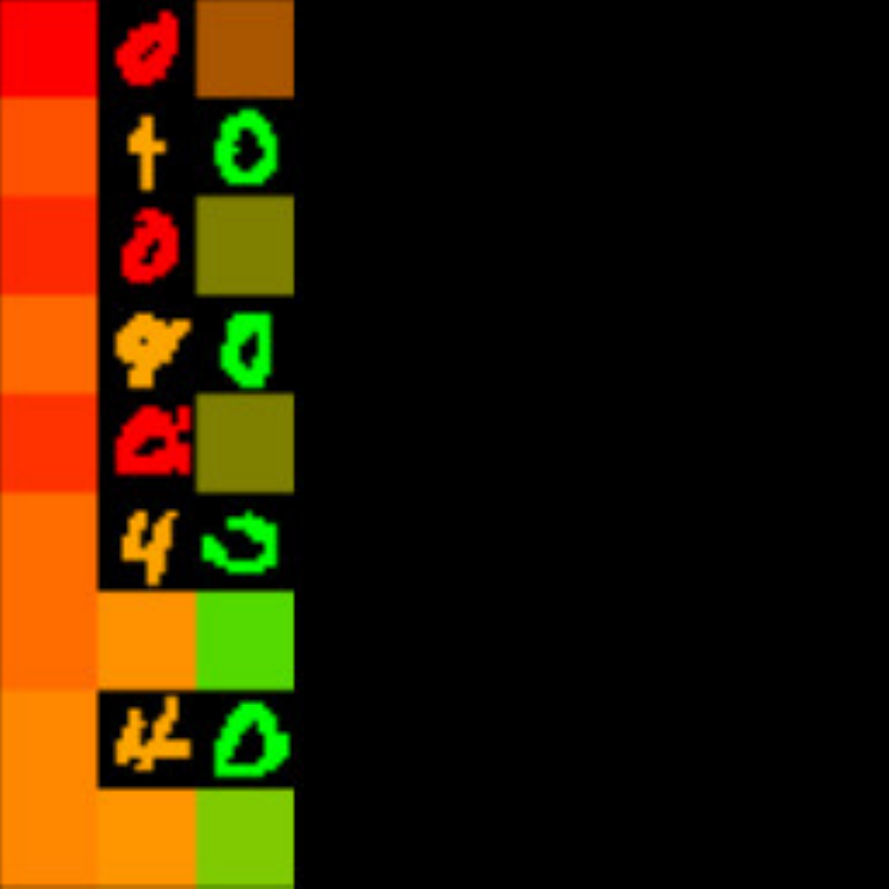}  & \includegraphics[width=0.1\textwidth]{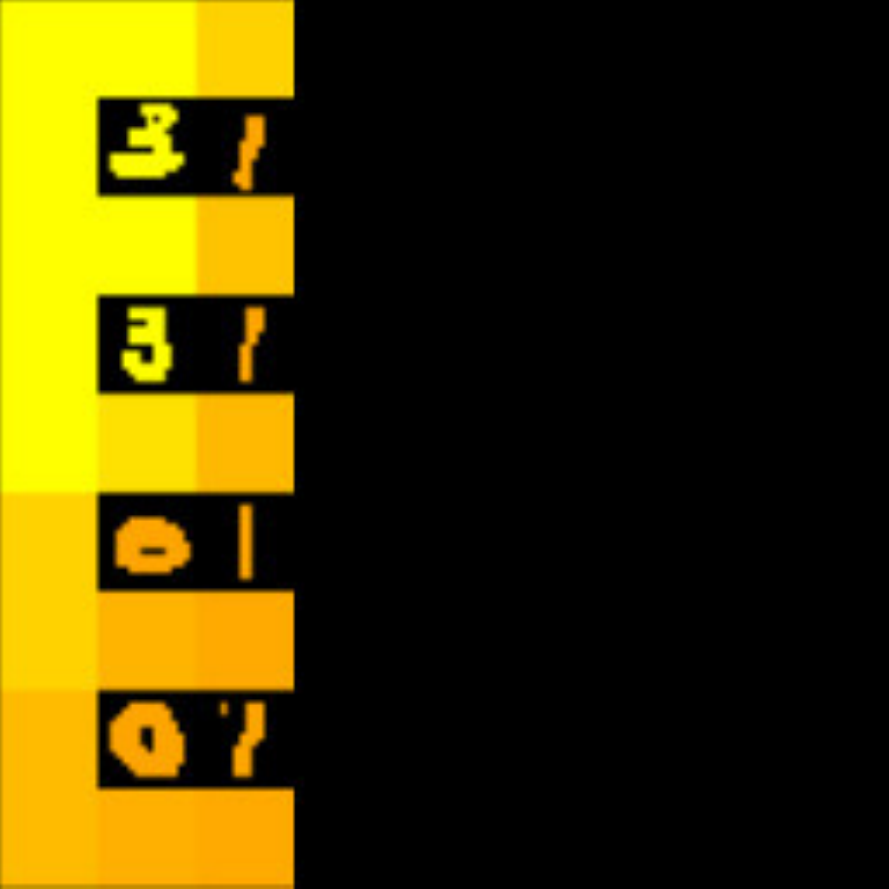} & \includegraphics[width=0.1\textwidth]{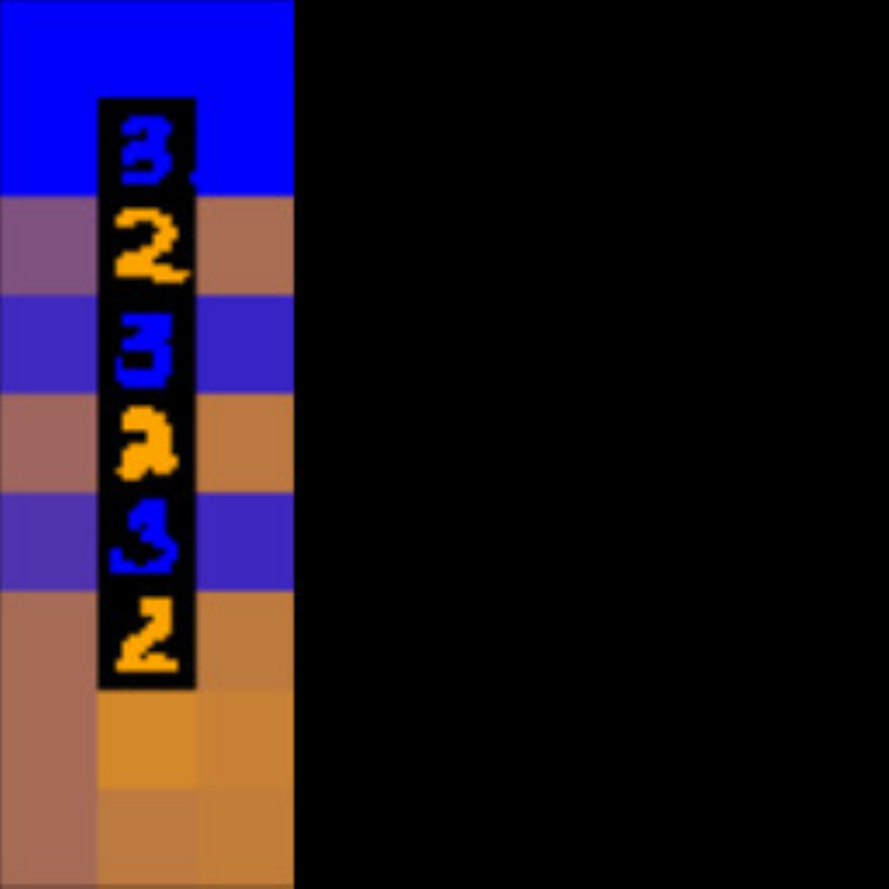} \\
    Structure Rendering (Partial) & \includegraphics[width=0.1\textwidth]{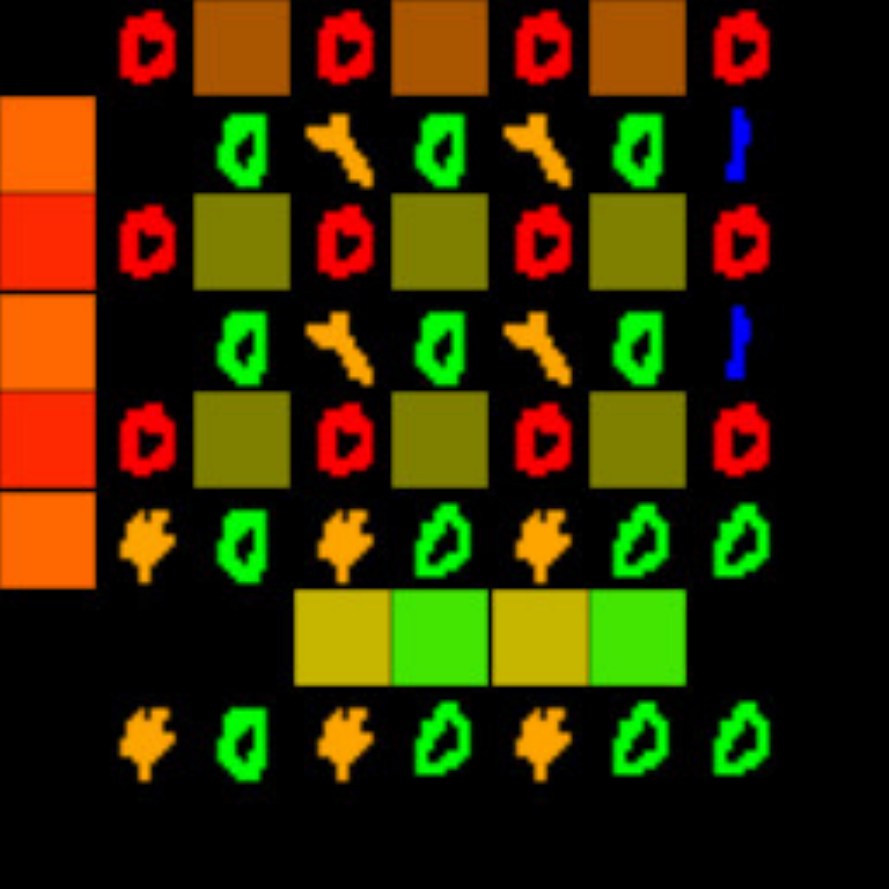}  & \includegraphics[width=0.1\textwidth]{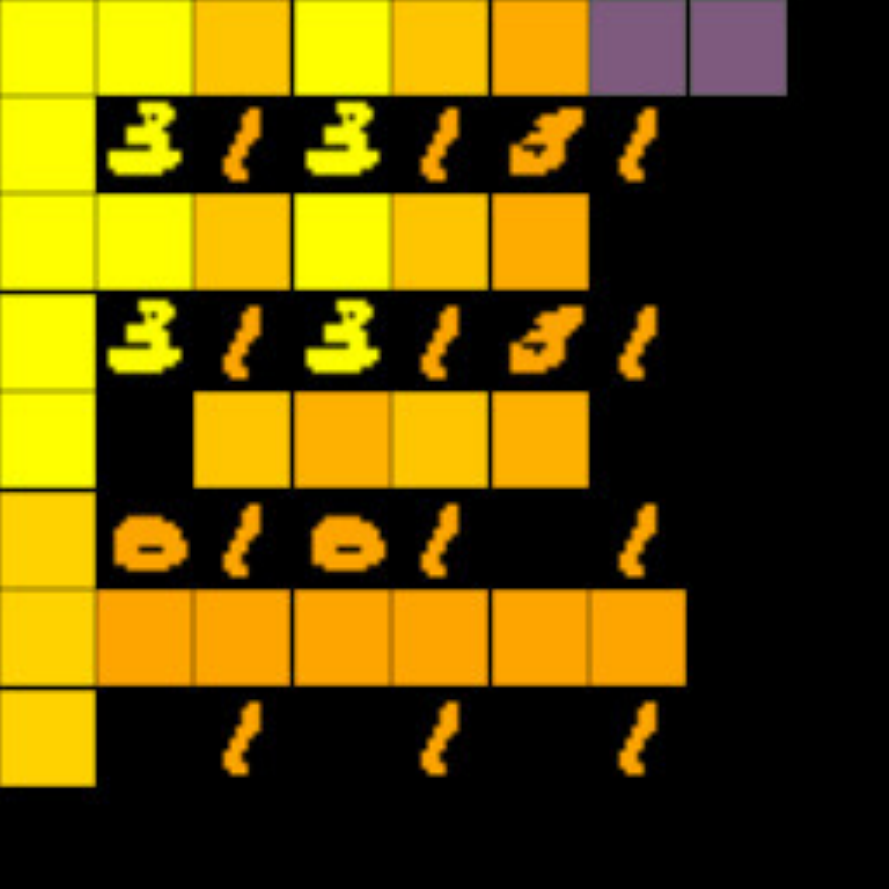} & \includegraphics[width=0.1\textwidth]{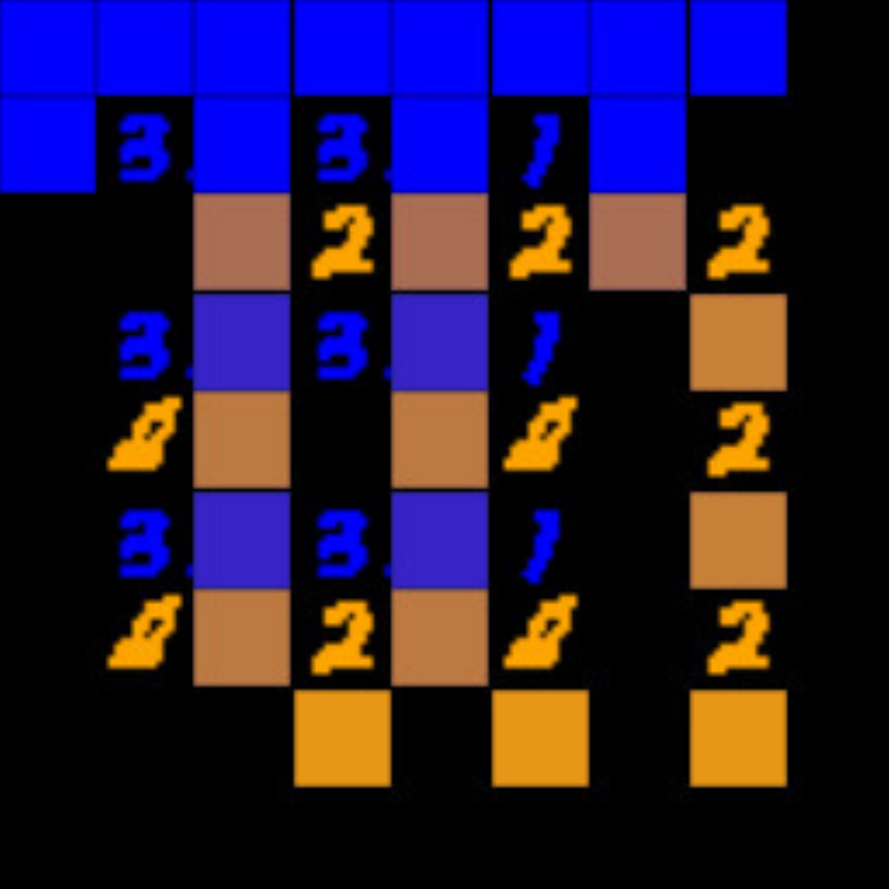} \\
    Structure Rendering (Extrapolated) & \includegraphics[width=0.1\textwidth]{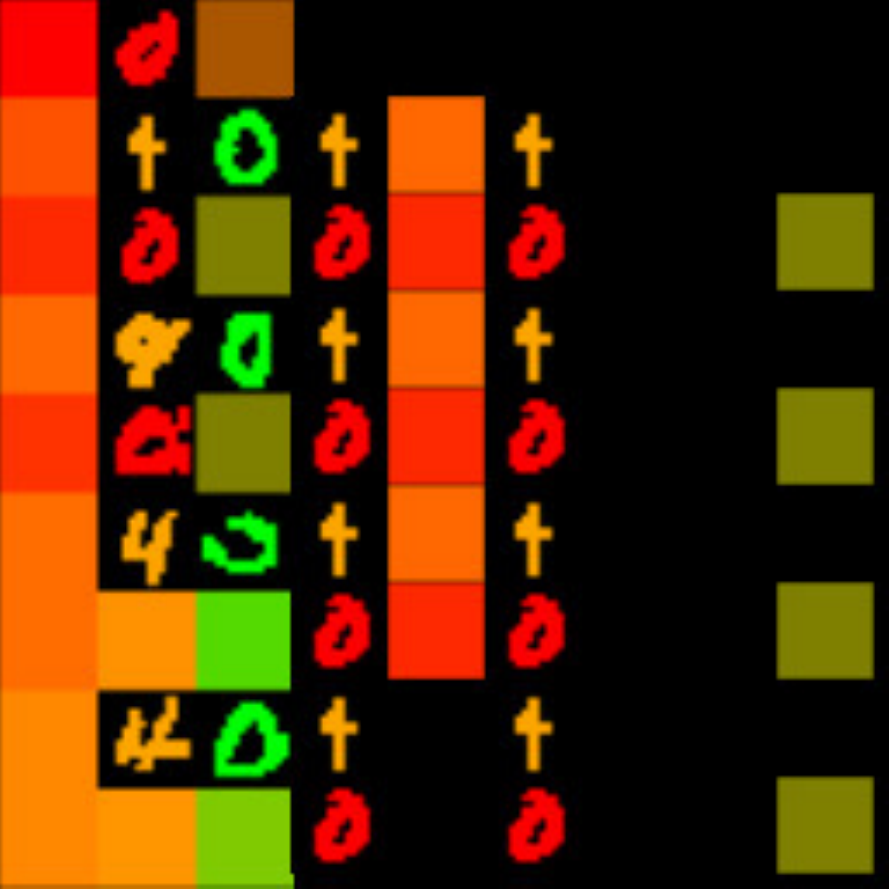}  & \includegraphics[width=0.1\textwidth]{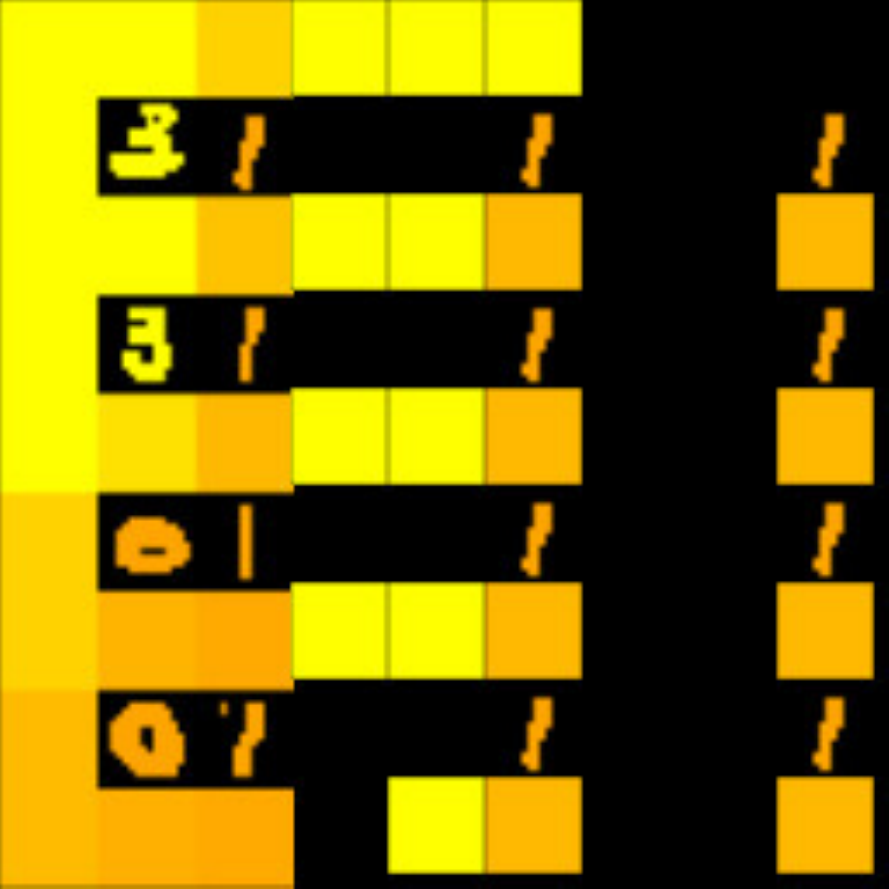} & \includegraphics[width=0.1\textwidth]{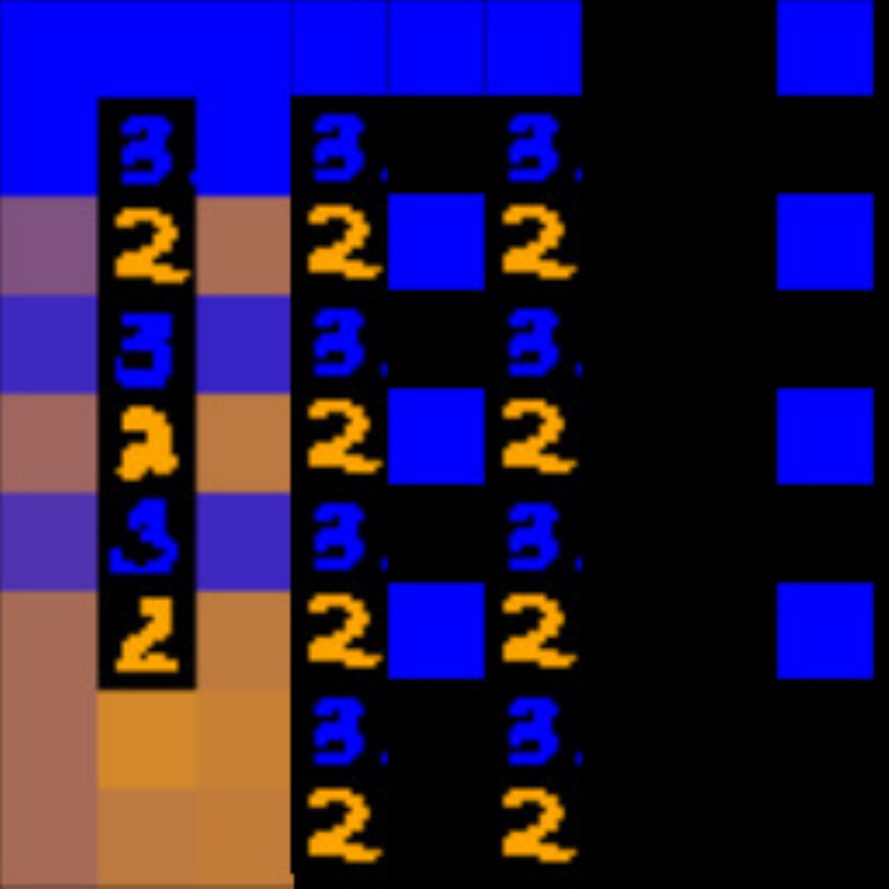} \\
    PS-GM (GLCIC) & \includegraphics[width=0.1\textwidth]{images-synth-glcicprog1.pdf}  & \includegraphics[width=0.1\textwidth]{images-synth-glcicprog2.pdf} & \includegraphics[width=0.1\textwidth]{images-synth-glcicprog3.pdf} \\
    Baseline (GLCIC) & \includegraphics[width=0.1\textwidth]{images-synth-glcic1.pdf}  & \includegraphics[width=0.1\textwidth]{images-synth-glcic2.pdf} & \includegraphics[width=0.1\textwidth]{images-synth-glcic3.pdf} \\
    PS-GM (CycleGAN) & \includegraphics[width=0.1\textwidth]{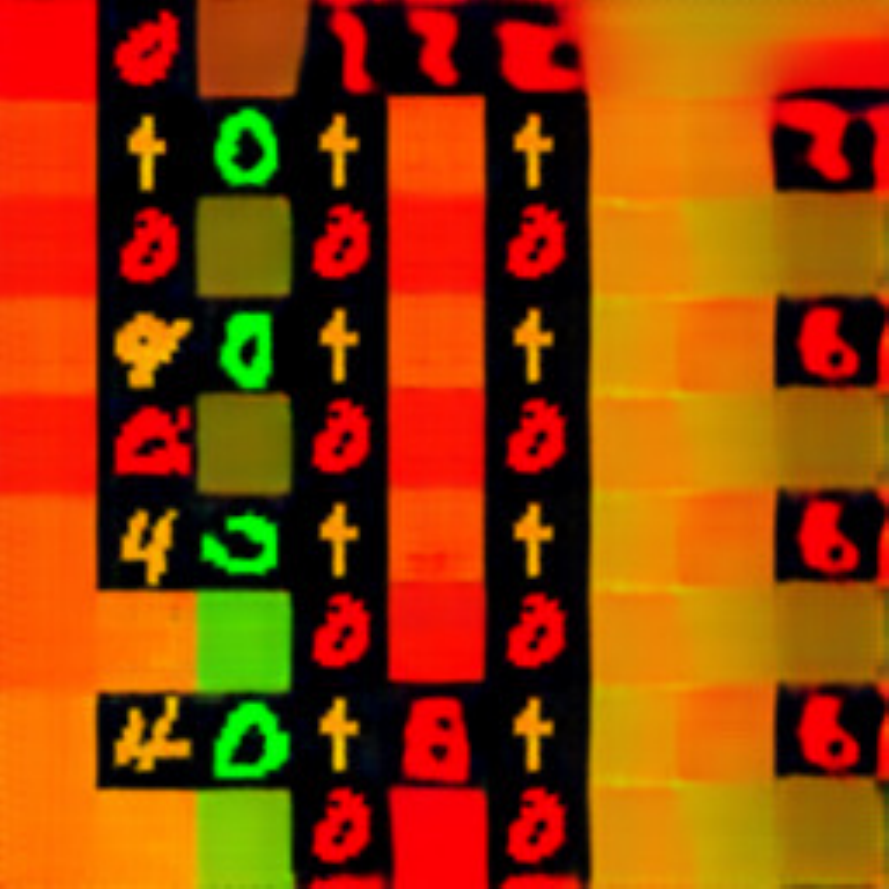}  & \includegraphics[width=0.1\textwidth]{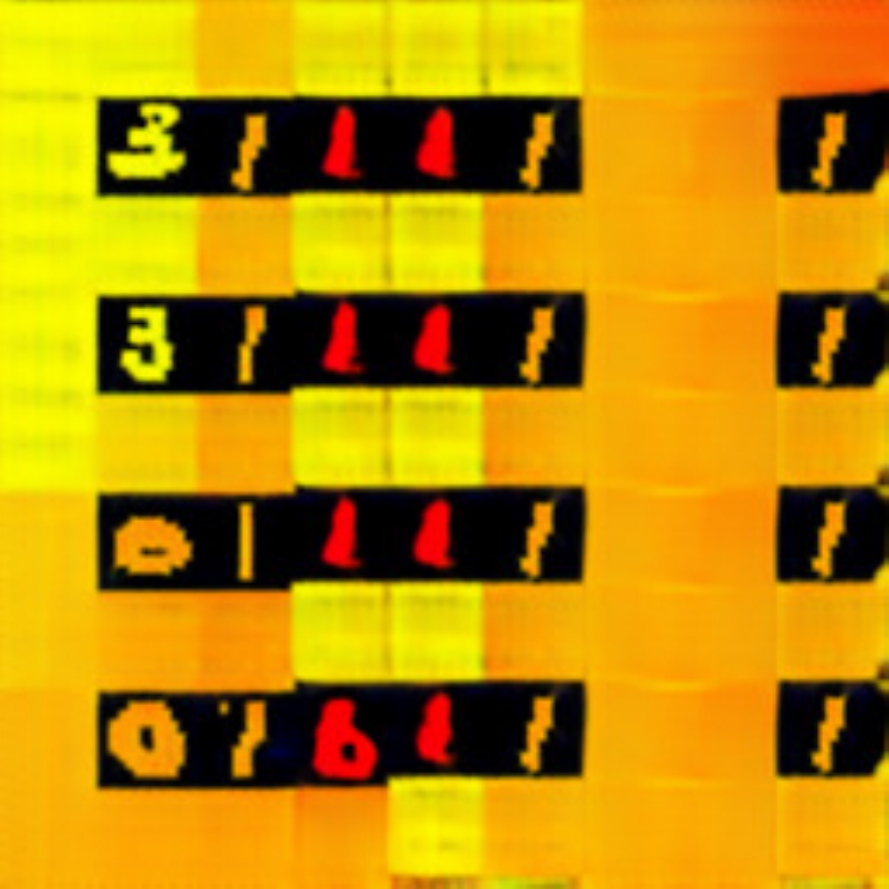} & \includegraphics[width=0.1\textwidth]{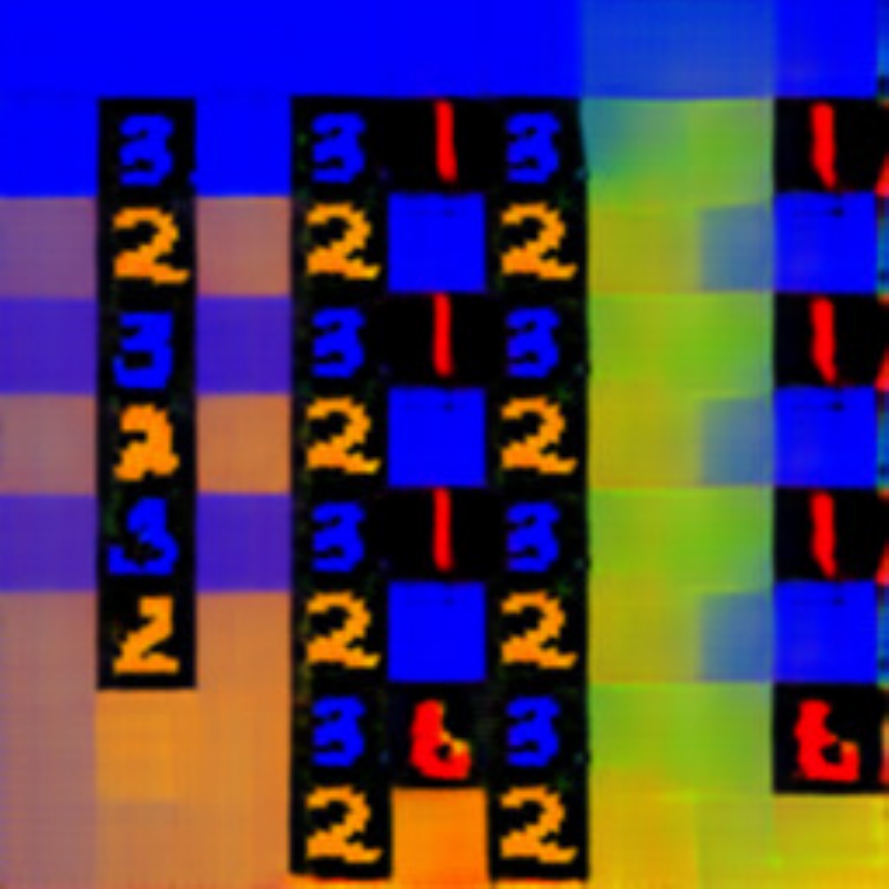} \\
    Baseline (CycleGAN) & \includegraphics[width=0.1\textwidth]{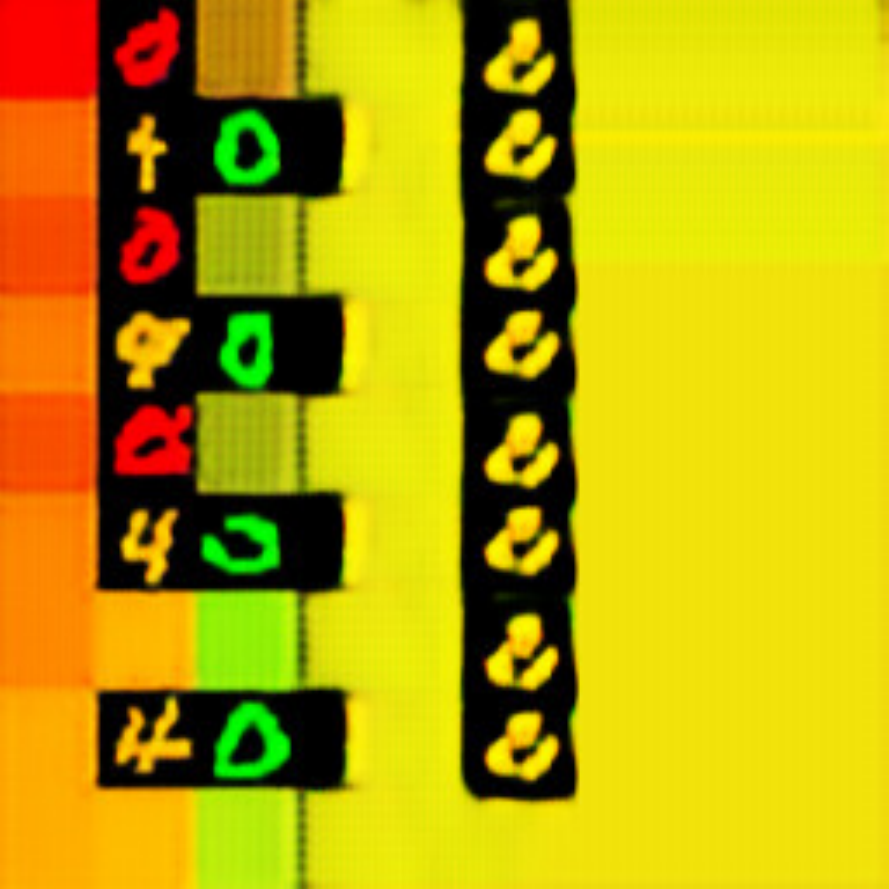}  & \includegraphics[width=0.1\textwidth]{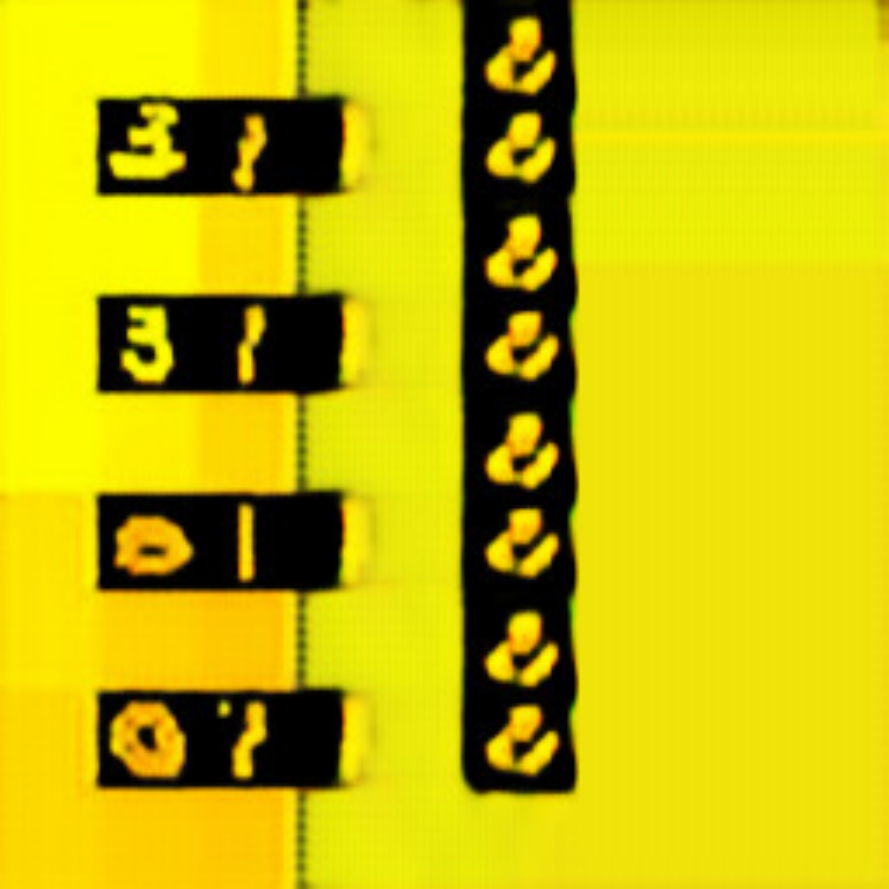} & \includegraphics[width=0.1\textwidth]{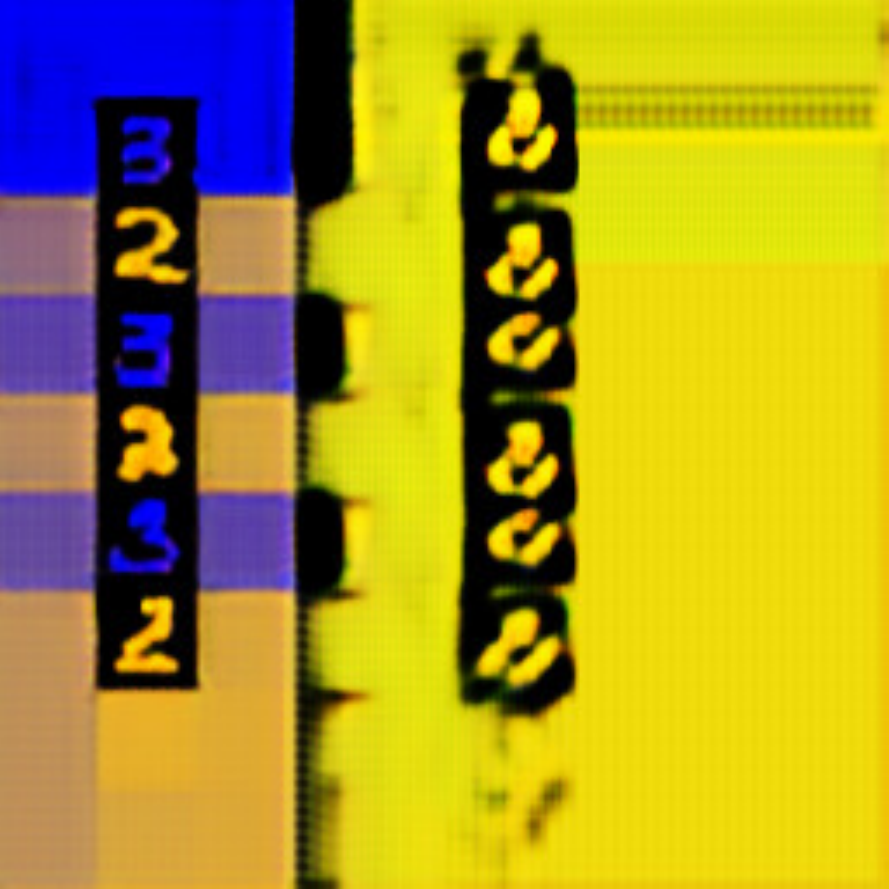} \\
    PS-GM (VED) & \includegraphics[width=0.1\textwidth]{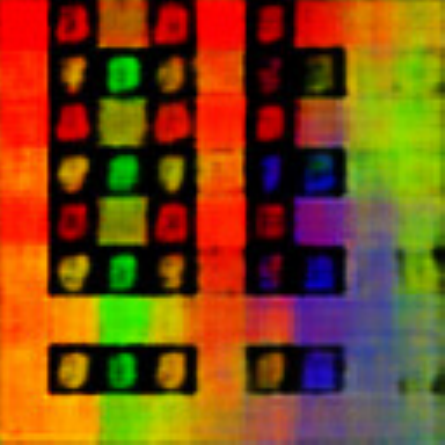}  & \includegraphics[width=0.1\textwidth]{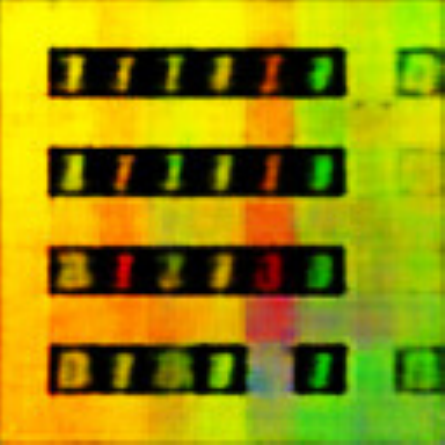} & \includegraphics[width=0.1\textwidth]{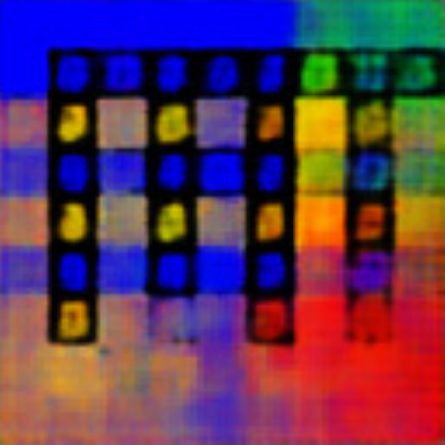} \\
    Baseline (VED) & \includegraphics[width=0.1\textwidth]{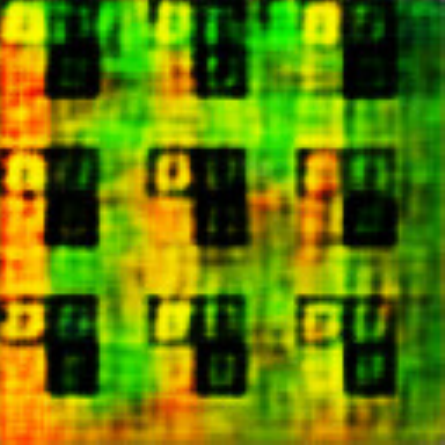}  & \includegraphics[width=0.1\textwidth]{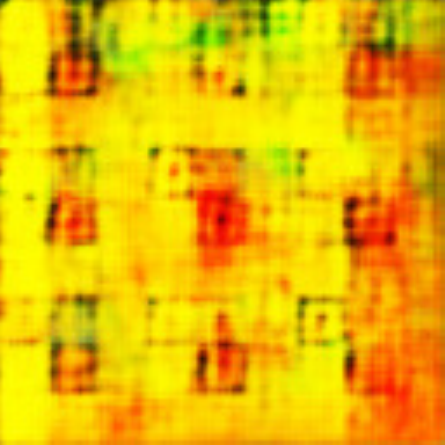} & \includegraphics[width=0.1\textwidth]{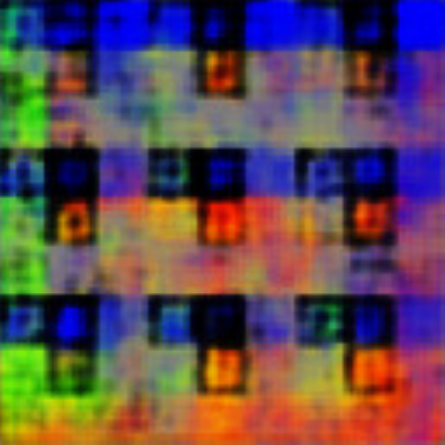} \\
      \end{tabular} 
  \caption{Examples of our image completion pipeline on our synthetic dataset.}
  \label{fig:expsyntheticappendix}
\end{figure*}

\end{document}